\documentclass{article}
\usepackage{iclr2026_conference,times}

\usepackage{amsmath,amsfonts,bm}

\def\eqref#1{equation~\ref{#1}}
\def\1{\bm{1}}

\DeclareMathAlphabet{\mathsfit}{\encodingdefault}{\sfdefault}{m}{sl}
\SetMathAlphabet{\mathsfit}{bold}{\encodingdefault}{\sfdefault}{bx}{n}

\newcommand{\R}{\mathbb{R}}

\usepackage{hyperref}
\usepackage{url}
\usepackage{graphicx}
\usepackage{booktabs}
\usepackage{amsmath,amssymb,amsthm}
\usepackage{algorithm,algpseudocode}

\graphicspath{{figures/}}

\newtheorem{theorem}{Theorem}
\newtheorem{proposition}{Proposition}
\newtheorem{lemma}{Lemma}
\newtheorem{corollary}{Corollary}
\theoremstyle{definition}
\newtheorem{definition}{Definition}
\renewcommand{\R}{\mathbb{R}}
\newcommand{\dq}{\mathrm{dq}}
\newcommand{\VC}{\mathrm{VC}}

\title{Spectral Anatomy of Quantum Gaussian Process Kernels}

\author{%
\normalfont\normalsize
Jian Xu$^{1,2}$\thanks{\texttt{jian.xu@riken.jp}},
Chao Li$^{2}$, Guang Lin$^{2}$, Yuning Qiu$^{2}$, Delu Zeng$^{3}$, John Paisley$^{4}$, Qibin Zhao$^{1}$\\[0.3em]
$^{1}$RIKEN iTHEMS\quad $^{2}$RIKEN AIP\quad $^{3}$South China University of Technology\quad $^{4}$Columbia University
}

\newcommand{\Seff}{S}
\newcommand{\reff}{r_{\mathrm{eff}}}
\newcommand{\Kq}{K_{q}}

\iclrfinalcopy  % preprint version: show authors and "Published as..." header.

\begin{document}

\maketitle

% preprint: clear ICLR "Published as..." running headers AFTER \maketitle
\lhead{}
\chead{}
\rhead{}
\renewcommand{\headrulewidth}{0pt}
\thispagestyle{fancy}

\begin{abstract}
Two recent results have reshaped quantum Gaussian processes (QGPs). On the
one hand, \citet{lowe2025assessing} rule out the exponential speedups
claimed by HHL-based QGP regression in the typical, well-conditioned
regime; on the other, an independent line of work shows that highly
expressive quantum kernels suffer posterior pathologies that break
Bayesian optimization. We show that these seemingly unrelated phenomena
are governed by the same quantity: the normalized spectral entropy
$\Seff(K)/\log n$ of the kernel Gram matrix. We prove a Cauchy--Schwarz
tail bound on Nystr\"om approximation error, a finite-sample
variance-contraction identity in terms of Bach's degrees of freedom
$d_\sigma(K)$, and a characterization of the \emph{target-dependent}
optimal entropy via the intrinsic dimension of the target in the kernel
eigenbasis. Empirically, the diagnostic is kernel-agnostic: hardware-efficient,
matchgate, IQP \emph{and} RBF/Mat\'ern/RFF/deep-kernel families all
collapse onto identical $\Seff/\log n$ curves on dequantization, ECE, and
variance-contraction panels. The NLL sweet spot lives at high entropy
for smooth targets and at low entropy for band-limited quantum-data
targets. The diagnostic transfers from simulator to IBM Heron hardware with
median absolute error $3.2\%$ and mean $5.2\%$ in $\Seff/\log n$ across
$24$ configurations at $n_q = 4$, with matchgate and IQP within
$5\%$ mean and a single HE configuration returning a $30\%$ outlier
that drops to $0.5\%$ on rerun (attributed to calibration drift); the
same diagnostic transfers to a second Heron backend (mean
error $2.7\%$) and to a $n_q = 6$ scale-up on the original backend
(mean error $1.7\%$). No error mitigation is applied throughout.
\end{abstract}

\section{Introduction}
\label{sec:intro}

Quantum Gaussian processes (QGPs) promise two things that classical GPs cannot
easily deliver: kernels that encode physics-motivated symmetries
\citep{jager2026provable,rapp2024quantum}, and---in principle---asymptotic
runtime speedups for Bayesian inference via quantum linear solvers
\citep{zhao2019bayesian,zhao2019quantum,chen2022quantum}. The past eighteen
months have however brought a sharp reframing.

First, \citet{lowe2025assessing} proved that the kernel matrices
arising in many QGP regression settings---specifically, those for which
HHL-based exponential speedups had previously been claimed---have
condition number scaling $\kappa(K) = \Omega(n)$, so that the HHL pipeline
delivers at most polynomial advantage in those regimes. Second, an independent
line of work on quantum-kernel concentration
\citep{thanasilp2024exponential,kubler2021inductive} shows that
highly expressive quantum feature maps produce kernel matrices that approach
either the constant matrix $c \mathbf{1}\mathbf{1}^{\!\top}$ or the identity
$I_n$, causing the GP posterior to either over-fit confidently to garbage or
collapse to the prior. Both pathologies destroy downstream uses of the
posterior such as Bayesian optimization
\citep{dai2023quantum,rapp2024quantum}.

Each finding has been treated as a separate cautionary tale. We argue they are
the same tale, told from different angles. \emph{The eigenspectrum of the
kernel Gram matrix} controls both: how well a classical random-feature scheme
can approximate the kernel, and how informative the GP posterior remains. A
small number of spectral statistics, computable from training data alone,
predict both phenomena across ansatz families. They identify a Goldilocks
region of \emph{useful hardness}: kernels that are not classically trivial,
but also not so concentrated that the GP posterior dies. The location of this
region is not fixed; it depends on the target function.

\paragraph{Contributions.} We make three concrete claims, all backed by
quantitative experiments on three ansatz families and a real IBM Heron device.

\textbf{(C1)~A label-free spectral coordinate.} We adopt the
normalized spectral entropy $\Seff(K)/\log n$ as a single
\emph{scale-invariant, label-free} design coordinate on which
dequantization difficulty, posterior calibration, and variance
contraction can be read off jointly
(Sections~\ref{sec:spectral}--\ref{sec:universal}). Structurally
diverse quantum families (hardware-efficient, matchgate, IQP) and
classical baselines (RBF/Mat\'ern/RFF/deep) populate \emph{the same
narrow band on this coordinate} once $(L, s)$ is swept, which we use
as the empirical basis for treating $\Seff/\log n$ as the natural axis
for downstream design. The diagnostic complements existing kernel
descriptors such as Bach's effective degrees of freedom $d_\sigma$
\citep{bach2017equivalence} and kernel-target alignment
\citep{cristianini2001kernel,kornblith2019similarity}.

\textbf{(C2)~The target-dependent half.} The location of the predictive NLL
minimum on the spectral axis depends on the target. Classically expressive
targets place the minimum at high entropy ($\Seff/\log n \approx 0.9$);
structured quantum-data targets place it at low entropy
($\Seff/\log n \approx 0.1$). The frontier is shaped by the interaction of
kernel spectrum and target spectral content (Section \ref{sec:target}).

\textbf{(C3)~NISQ-portable estimation.} The spectral statistics
defining the diagnostic can be estimated on current IBM Heron
processors with errors compatible with the simulator sampling
variation: across $24$ frontier configurations at $n_q = 4$
balanced over the three ansatz families, the hardware-vs.-simulator
deviation in $\Seff/\log n$ has median $0.032$ and mean $0.052$ on
\texttt{ibm\_aachen}, with matchgate and IQP within $0.093$
worst-case; a single HE configuration returned $0.300$ in one run and
$0.005$ on rerun, which we attribute to backend calibration drift
based on five repeated reruns (Section~\ref{sec:hardware}). A second
Heron device (\texttt{ibm\_marrakesh}, $n_q = 4$) gives consistent
numbers, and a $n_q = 6$ scale-up on the same backend transfers with
mean error $0.017$, indicating that the diagnostic does not degrade
at the qubit counts we tested. All hardware results are reported
with \emph{no error mitigation} so that the reader can read off the
raw cost of moving the diagnostic from a simulator to a Heron
device.

We close with a practical recipe (Section \ref{sec:recipe}) that, given an
ansatz and unlabeled training inputs, predicts the worth of using a quantum
kernel and the spectral regime in which to operate.

\section{Background and Related Work}
\label{sec:related}

\paragraph{Quantum GPs and HHL-based speedups.} The seminal QGP algorithms
\citep{zhao2019quantum,zhao2019bayesian,zhao2019training} construct
posterior mean and variance estimates via quantum linear systems
\citep{harrow2009quantum}. \citet{chen2022quantum} relax the oracular kernel
assumption via coherent-state amplitude estimation;
\citet{hu2026quantum} provide a quantum gradient method for hyperparameter
training; \citet{farooq2024quantum} and \citet{galvis2025quantum}
exploit reduced-rank approximations to escape the dense-Hilbert-space
regime.

\paragraph{Dequantization and lower bounds.} \citet{tang2019quantum} initiated
the dequantization program. \citet{lowe2025assessing} prove that the
condition number of the kernel matrix in many QGP settings grows at least
linearly with $n$, implying at most polynomial quantum advantage. This
neutralizes the exponential speedups conjectured by the HHL line and
relocates the question from ``is there a speedup'' to ``what spectral
structure makes a kernel non-trivially quantum.''

\paragraph{Quantum-NNGP theory.} A parallel theoretical line establishes
that quantum neural networks in suitable limits converge to Gaussian
processes \citep{garcia2023deep,girardi2025trained,
melchor2025quantitative}, providing the quantum analogue of the
classical NNGP correspondence \citep{lee2017deep,matthews2018gaussian}.
\citet{jager2026provable} construct a family of scalable, provable QGPs
based on matchgate symmetry, demonstrating phase-diagram learning on up to
100 qubits.

\paragraph{Hardware-side QGPs and applications.} \citet{otten2020quantum}
run a five-qubit QGP regression on the IBM Boeblingen device, and
\citet{rapp2024quantum} demonstrate quantum kernel Bayesian optimization
on \texttt{ibmq\_montreal}. On the application side,
\citet{dai2022quantum} apply a quantum GP model to learn the
potential-energy surface of a polyatomic molecule, illustrating the
chemistry use case our spectral diagnostic targets.

\paragraph{Kernel concentration.} Quantum kernels constructed from deep
random circuits suffer exponential concentration of measure
\citep{thanasilp2024exponential,kubler2021inductive}, the kernel analogue of
the barren plateau phenomenon in variational quantum algorithms
\citep{mcclean2018barren}. Concentration is typically diagnosed by the
variance of off-diagonal kernel entries; we will see in
Section~\ref{sec:spectral} that this variance is a coarse proxy for the
spectral statistics that more directly control downstream behavior.

\paragraph{Gap.} Existing QGP literature treats kernel-dequantizability and
kernel-concentration as separate phenomena tied to different sub-fields
(quantum complexity vs.\ trainability). No prior work analyzes them via a
single spectral framework, and no prior work isolates the role of the target
in selecting the useful spectral regime. The present paper supplies both.

\section{The Spectral Framework}
\label{sec:spectral}

\subsection{Setting and spectral statistics}

\begin{definition}[Quantum Gaussian process]
\label{def:qgp}
Let $U_\phi : \mathcal{X} \to \mathrm{U}(2^{n_q})$ be a parameterized
unitary on $n_q$ qubits with input domain $\mathcal{X} \subseteq \R^d$. The
associated \emph{quantum kernel} is
\[
  \Kq(x, x') \;=\; \bigl|\langle 0^{n_q} | U_\phi(x')^{\!\dagger} U_\phi(x) | 0^{n_q}\rangle\bigr|^{2}.
\]
Given training pairs $(X, y) = (\{x_i\}_{i=1}^n, \{y_i\}_{i=1}^n)$ with
$y_i = f(x_i) + \varepsilon_i$, $\varepsilon_i \sim \mathcal{N}(0, \sigma^2)$,
the GP regressor $\mathrm{GP}(0, \Kq)$ has predictive mean and variance at
$x_*$
\[
  \mu(x_*) = k_*^{\!\top}(K+\sigma^2 I)^{-1} y,
  \qquad
  v(x_*) = \Kq(x_*,x_*) - k_*^{\!\top}(K+\sigma^2 I)^{-1} k_*,
\]
where $K_{ij} = \Kq(x_i, x_j)$ and $(k_*)_i = \Kq(x_i, x_*)$.
\end{definition}

\begin{definition}[Spectral statistics]
\label{def:spec}
For PSD $K \in \R^{n\times n}$ with eigenvalues
$\lambda_1 \ge \cdots \ge \lambda_n \ge 0$ and normalized eigenvalues
$\tilde\lambda_i := \lambda_i / \sum_j \lambda_j$, define
\begin{align*}
  \reff(K) &\;=\; \frac{\bigl(\sum_i \lambda_i\bigr)^2}{\sum_i \lambda_i^2}\;\in\;[1, n], \\
  \Seff(K) &\;=\; -\sum_i \tilde\lambda_i \log\tilde\lambda_i \;\in\; [0, \log n], \\
  c(K)     &\;=\; \mathrm{std}(K_{\mathrm{off}}) \,/\, \mathrm{mean}|K_{\mathrm{off}}|,
\end{align*}
namely the participation ratio, the normalized Shannon entropy, and the
off-diagonal concentration proxy.
\end{definition}

All three are computable from $K$ alone. We adopt
$s(K) := \Seff(K)/\log n \in [0, 1]$ as the primary diagnostic; the next
lemma shows that $\reff$ is essentially redundant.

\begin{lemma}[Entropy controls effective rank]
\label{lem:entropy-rank}
For any PSD $K$ with eigenvalues as above,
\[
  1 \;\le\; \reff(K) \;\le\; e^{\Seff(K)} \;\le\; n^{s(K)} \;\le\; n.
\]
The first inequality is tight at constant collapse; the third is tight at
Haar concentration; the bound $\reff \le e^{\Seff}$ is the standard
Rényi-2-vs.-Shannon entropy inequality.
\end{lemma}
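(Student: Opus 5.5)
The plan is to rewrite every quantity in the chain through the normalized spectrum $\tilde\lambda$. This is a probability vector on $\{1,\dots,n\}$ (assuming $K\neq 0$; the zero matrix is excluded because $\tilde\lambda$ is undefined). Since $\reff$ is homogeneous of degree zero in $\lambda$,
\[
  \reff(K) = \frac{1}{\sum_i \tilde\lambda_i^2} = \exp\bigl(H_2(\tilde\lambda)\bigr),
  \qquad H_2(p) := -\log \sum_i p_i^2 ,
\]
which is the Rényi-2 entropy of $\tilde\lambda$. Similarly $\Seff(K) = H_1(\tilde\lambda)$ is its Shannon entropy, and $e^{\Seff} = n^{s(K)}$ holds by the definition $s = \Seff/\log n$. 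Under this translation the third inequality in the statement is in fact an equality, and the chain reduces to three facts: $H_2 \ge 0$, $H_2 \le H_1$, and $H_1 \le \log n$.

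\textbf{Outer bounds.} For the first inequality, $\sum_i \tilde\lambda_i^2 \le (\max_i\tilde\lambda_i)\sum_i\tilde\lambda_i \le 1$, so $\reff\ge 1$. Equality holds exactly when one normalized eigenvalue equals $1$, i.e.\ $K$ has rank one. This covers the constant-collapse case $K = c\mathbf 1\mathbf 1^{\top}$, whose spectrum is $(cn,0,\dots,0)$. For the last inequality, $\Seff \le \log n$ follows from Jensen's inequality applied to the concave $\log$:
\[
  \sum_i \tilde\lambda_i \log(1/\tilde\lambda_i) \le \log \sum_i \tilde\lambda_i\cdot(1/\tilde\lambda_i) \le \log n,
\]
where the sum runs over the support of $\tilde\lambda$. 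Equality holds iff the spectrum is flat. The Haar-concentrated limit $K\to I_n$ is this case, and there $s=1$, so the inequality $e^{\Seff}\le n^{s}$ and the bound $n^s\le n$ are both attained.

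\textbf{Main step: $\reff \le e^{\Seff}$.} This is the only non-trivial step. I would prove it by Jensen's inequality applied to the convex function $-\log$, taking expectations under $p=\tilde\lambda$:
\[
  -\log\sum_i p_i\, p_i \;\le\; \sum_i p_i(-\log p_i),
\]
that is, $H_2(p) \le H_1(p)$. Exponentiating gives the claim. Equivalently, this is monotonicity of Rényi entropy in its order, and I would cite it that way as a remark. Two points need care. First, zero eigenvalues contribute $0\log 0 := 0$ and may be dropped from both sides. Second, the "standard inequality" referred to in the statement should be identified precisely as this Jensen step. It is the one place where an inequality actually runs in a nontrivial direction, so I would verify that the direction is correct, which it is because $\log$ is concave.
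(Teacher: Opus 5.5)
Your proof is correct and follows the paper's argument: both write $\reff(K) = 1/\sum_i \tilde\lambda_i^2 = e^{H_2}$, identify $\Seff = H_1$, note that $e^{\Seff} = n^{s}$ holds by definition, and reduce the chain to $\sum_i\tilde\lambda_i^2 \le 1$, $H_2 \le H_1$, and $s \le 1$. The only difference is that you prove $H_2 \le H_1$ directly via Jensen's inequality, whereas the paper cites the monotonicity of R\'enyi entropies in their order; you also prove $\Seff \le \log n$ explicitly, which the paper takes from the definition of $\Seff$.
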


\subsection{Two reference pathologies}

\begin{lemma}[Constant collapse]
\label{lem:constant}
If $K = c \mathbf{1}\mathbf{1}^{\!\top} + \delta E$ is PSD with
$\|E\|_{\mathrm{op}} \le 1$, $c > 0$, and $0 \le \delta \le c$, then
\[
  \reff(K) = 1 + O(\delta/c),\quad \Seff(K) = O(\delta/c).
\]
Moreover, the noiseless GP posterior satisfies
$\mu(x_*) = \bar y + O(\delta/c)$ and $v(x_*) = \Kq(x_*,x_*) - c + O(\delta/c)$,
i.e.\ the predictive mean is the empirical mean and the variance is
$x_*$-independent.
\end{lemma}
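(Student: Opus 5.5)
The plan is to treat $K$ as a perturbation of the rank-one matrix $c\mathbf{1}\mathbf{1}^{\!\top}$, which has spectrum $\{nc, 0, \dots, 0\}$, and to push everything through Weyl's inequality. Since $\|\delta E\|_{\mathrm{op}} \le \delta$, Weyl gives $\lambda_1 \in [nc-\delta,\, nc+\delta]$ and $\lambda_i \in [0, \delta]$ for $i \ge 2$; the lower bound $0$ comes from PSD-ness of $K$. For the trace I will use $\mathrm{tr}\,K = nc + \delta\,\mathrm{tr}\,E$ with $|\mathrm{tr}\,E| \le n$. Hence $\mathrm{tr}\,K = nc\,(1+O(\delta/c))$, and the top eigenvalue carries almost all of the mass.

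\emph{Participation ratio.} I will bound $\sum_i \lambda_i^2 \ge \lambda_1^2$, which gives
\[
\reff(K) \le \frac{(\mathrm{tr}\,K)^2}{\lambda_1^2} \le \frac{(1+\delta/c)^2}{(1-\delta/(nc))^2} = 1+O(\delta/c),
\]
using $\delta \le c$ so the denominator stays bounded away from zero. The matching lower bound $\reff \ge 1$ is part of Lemma~\ref{lem:entropy-rank}.

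\emph{Entropy.} Let $\varepsilon := 1-\tilde\lambda_1 = \sum_{i\ge 2}\lambda_i/\mathrm{tr}\,K \le (n-1)\delta/(nc-\delta) = O(\delta/c)$. By the grouping property of Shannon entropy, $\Seff(K) \le h(\varepsilon) + \varepsilon\log(n-1)$, where $h$ is the binary entropy. This is $O\bigl(\varepsilon\log(n/\varepsilon)\bigr)$, so I expect to obtain $\Seff = O(\delta/c)$ only up to a logarithmic factor $\log(nc/\delta)$. I will state the bound with that factor made explicit, and read the lemma's $O(\cdot)$ as absorbing it (or as holding for fixed $n$ with $\delta/c$ bounded below by a polynomial in $1/n$). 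This is a genuine caveat rather than a gap in the method.

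\emph{Posterior.} I will interpret ``noiseless'' as the limit $\sigma^2 \to 0^+$ taken after the perturbative expansion. I will also assume the test column has the same structure, $k_* = c\mathbf{1} + \delta e_*$ with $\|e_*\|_\infty \le 1$. For the unperturbed matrix, Sherman--Morrison gives
\[
(c\mathbf{1}\mathbf{1}^{\!\top}+\sigma^2 I)^{-1} = \sigma^{-2}\Bigl(I - \tfrac{c}{\sigma^2+nc}\mathbf{1}\mathbf{1}^{\!\top}\Bigr).
\]
It follows that $c\mathbf{1}^{\!\top}(\cdot)^{-1}y = \frac{nc}{nc+\sigma^2}\,\bar y \to \bar y$ and $c^2\mathbf{1}^{\!\top}(\cdot)^{-1}\mathbf{1} = \frac{nc^2}{nc+\sigma^2} \to c$. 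These give $\mu(x_*) \to \bar y$ and $v(x_*) \to \Kq(x_*,x_*) - c$, and neither depends on $x_*$ beyond the diagonal term.

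\emph{Main obstacle.} The delicate step is controlling the $\delta$-corrections, because $(K+\sigma^2 I)^{-1}$ has norm up to $\sigma^{-2}$ on $\mathbf{1}^\perp$. A naive resolvent expansion therefore produces terms of order $\delta/\sigma^2$, not $\delta/c$. My first attempt will be to decompose $\R^n = \mathrm{span}(\mathbf{1}) \oplus \mathbf{1}^\perp$ and use the second resolvent identity only on the top eigendirection, where the inverse is bounded by $1/(nc-\delta)$. On $\mathbf{1}^\perp$, the cross terms $\delta e_*^{\!\top}(\cdot)^{-1}y$ are the true source of non-uniformity. I expect to need either $\sigma^2 \gtrsim \delta$ (noise dominating the perturbation) or an assumption that $y$ and $e_*$ have small projection onto the bottom eigenspace, and I would add whichever hypothesis is needed to make the stated $O(\delta/c)$ rates hold.
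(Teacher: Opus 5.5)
Your proposal cannot be completed into a proof of the lemma as written, and the reason is the right one: both caveats you raise are real defects of the statement. The paper's proof passes over them with a false inequality and an unjustified, in fact false, step. Your $\reff$ bound is correct and is a rigorous version of the paper's argument (Weyl perturbation of the rank-one base with spectrum $(nc,0,\dots,0)$). Your entropy bound is also correct, and it simplifies: $\delta\le c$ gives $\varepsilon\le(n-1)\delta/(nc-\delta)\le\delta/c$. The logarithmic factor, however, cannot be absorbed, because $\Seff(K)=O(\delta/c)$ is false. Take $n=2$ and $K=c\mathbf{1}\mathbf{1}^{\top}+\delta(I-\tfrac12\mathbf{1}\mathbf{1}^{\top})$, which meets every hypothesis and has eigenvalues $2c$ and $\delta$. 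Then $\Seff(K)=h(p)\ge p\log(1/p)$ with $p=\delta/(2c+\delta)$, so $\Seff(K)/(\delta/c)\to\infty$ as $\delta/c\to0$. The same example refutes the paper's key inequality $\Seff\le\log(1+(n-1)\delta/(nc))$: at $\delta/c=10^{-2}$ the left side is about $0.031$ and the right about $0.005$. State the result as $\Seff(K)\le h(\delta/c)+(\delta/c)\log(n-1)=O\bigl((\delta/c)\log(nc/\delta)\bigr)$ for $\delta\le c/2$.

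On the posterior, your Sherman--Morrison computation is exactly the paper's, and the step you call the main obstacle is exactly the step the paper skips. It writes $k_*^{\top}(K+\sigma^2I)^{-1}y=\bar y\,\tfrac{nc}{\sigma^2+nc}+O(\delta/c)$, tacitly using $k_*=c\mathbf{1}$ (absent from the lemma; you rightly make it explicit), and then sends $\sigma^2\to0$. Your suspicion can be made definitive: the mean claim is false, so no further argument will close this step. For $K=c\mathbf{1}\mathbf{1}^{\top}+\delta P$ with $P=I-\mathbf{1}\mathbf{1}^{\top}/n$ and $k_*=c\mathbf{1}+\delta e_*$, the noiseless mean is exactly
\[
  \mu(x_*)=\bar y\Bigl(1+\frac{\delta\,\mathbf{1}^{\top}e_*}{nc}\Bigr)+e_*^{\top}(y-\bar y\mathbf{1}).
\]
The last term carries no factor of $\delta$, because the $1/\delta$ of $K^{-1}$ on $\mathbf{1}^{\perp}$ cancels the $\delta$ in front of $e_*$. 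Positive semidefiniteness of the augmented Gram matrix only forces $\|Pe_*\|_2=O(1)$, so this term is generically of order one. At $x_*=x_j$ (so $e_*=Pe_j$, which the lemma does not exclude) it gives $\mu(x_j)=y_j$, as a noiseless GP must. Neither of your extra hypotheses helps meaningfully. With $\sigma^2>0$ fixed and $\delta\le\sigma^2$, the second resolvent identity gives only $|\mu(x_*)-\bar y|\lesssim\bigl(\sigma^2/(nc)+n\delta/\sigma^2\bigr)\|y\|_\infty$, and the same example attains both terms. That bound is not $O(\delta/c)$ and does not survive $\sigma^2\to0$. Asking $y$ or $e_*$ to have small projection onto $\mathbf{1}^{\perp}$ either trivializes the target or restates the conclusion. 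The mean part must be restated, not proved.

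The variance part, by contrast, is true with $O(\delta)$ error under your assumption on $k_*$ plus $\Kq(x_*,x_*)=c+O(\delta)$ (e.g.\ the lemma's hypothesis imposed on the Gram matrix of $X\cup\{x_*\}$). A resolvent expansion is the wrong tool; sandwich $Q:=k_*^{\top}(K+\sigma^2I)^{-1}k_*$ instead.
\begin{itemize}
\item Upper bound: $Q\le\Kq(x_*,x_*)\le c+O(\delta)$ is just $v(x_*)\ge0$, the Schur complement of the PSD augmented Gram matrix.
\item Lower bound: keeping only the top eigenpair $(\lambda_1,u_1)$ gives $Q\ge(u_1^{\top}k_*)^2/(\lambda_1+\sigma^2)$. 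Davis--Kahan gives $\sin\angle(u_1,\mathbf{1}/\sqrt n)=O(\delta/(nc))$, hence $u_1^{\top}k_*=c\sqrt n\,(1+O(\delta/c))$, and $\lambda_1\le nc+\delta$. So $Q\ge c-O(\delta)$ as $\sigma^2\to0$.
\end{itemize}
Hence $v(x_*)=\Kq(x_*,x_*)-c+O(\delta)$, which implies the stated $O(\delta/c)$ whenever $c\le1$, as for fidelity kernels. There is no $\sigma^{-2}$ blow-up: the ill-conditioned $\mathbf{1}^{\perp}$ directions enter only through a quantity that positive semidefiniteness bounds a priori.
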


\begin{lemma}[Haar concentration]
\label{lem:haar}
If $K = c I_n + \delta E$ is PSD with $\|E\|_{\mathrm{op}} \le 1$, $c > 0$,
$0 \le \delta \le c$, and the test-train cross-covariance
$\|k_*\|_\infty \le \delta'$ for $x_* \notin X$, then
\[
  \reff(K) = n + O(n\delta/c),\quad \Seff(K) = \log n + O(\delta/c),
\]
and $\mu(x_*) = O(\delta')$, $v(x_*) = \Kq(x_*,x_*) - O(\delta'^2 / c)$.
The posterior collapses to the prior at any point outside the training set.
\end{lemma}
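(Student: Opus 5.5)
The plan is to treat the spectral claims and the posterior claims separately. Both follow from perturbing the exact matrix $cI_n$ by a small operator-norm error.

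\textbf{Spectral claims.} Weyl's inequality gives $|\lambda_i - c| \le \delta\|E\|_{\mathrm{op}} \le \delta$ for every $i$. So each $\lambda_i \in [c-\delta, c+\delta]$. Write $\lambda_i = c(1+\eta_i)$ with $|\eta_i| \le \delta/c \le 1$.

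For the participation ratio, bound $\sum_i\lambda_i$ and $\sum_i\lambda_i^2$ directly:
\[
\reff(K) = \frac{(\sum_i(1+\eta_i))^2}{\sum_i(1+\eta_i)^2}.
\]
The numerator is $n^2(1+O(\delta/c))$ and the denominator is $n(1+O(\delta/c))$, using $|\eta_i|\le 1$ to keep $(1+\eta_i)^2 = 1+O(|\eta_i|)$. Hence $\reff = n(1+O(\delta/c)) = n + O(n\delta/c)$.

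For the entropy, the normalized eigenvalues satisfy $\tilde\lambda_i = (1+\eta_i)/(n+\sum_j\eta_j) = \frac1n(1+\epsilon_i)$ with $|\epsilon_i| = O(\delta/c)$. Then
\[
\log n - \Seff(K) = \sum_i\tilde\lambda_i\log(n\tilde\lambda_i) = \tfrac1n\sum_i(1+\epsilon_i)\log(1+\epsilon_i),
\]
which is the KL divergence from uniform. The cleanest route is to note $0\le \log n - \Seff = \sum_i\tilde\lambda_i \log(n\tilde\lambda_i) \le \max_i \log(n\tilde\lambda_i) = \log\frac{1+\delta/c}{1-\delta/c}$, which is $O(\delta/c)$ when $\delta/c$ is bounded away from $1$. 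In fact the KL gives $O((\delta/c)^2)$, which is stronger.

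\textbf{Main obstacle.} The edge case $\delta = c$ allows $\lambda_{\min}=0$ and $\epsilon_i=-1$. The max-ratio bound then blows up, so I would drop it and use the bound $(1+\epsilon)\log(1+\epsilon)\le 2|\epsilon|$ on $[-1,1]$ directly. This gives $\log n - \Seff \le 2\max|\epsilon_i| = O(\delta/c)$ uniformly.

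\textbf{Posterior claims.} I would read the posterior in the noiseless/small-$\sigma$ form of Definition~\ref{def:qgp} with $A := K+\sigma^2 I$. By Weyl, $\lambda_{\min}(A)\ge c-\delta+\sigma^2$. I would assume either $\delta\le c/2$, or that $\sigma^2$ is fixed and absorbed into the constant, so that $\|A^{-1}\|_{\mathrm{op}} = O(1/c)$.

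For the mean, $|\mu(x_*)| \le \|k_*\|_2\|A^{-1}\|\|y\|_2 \le \sqrt n\,\delta'\cdot O(1/c)\cdot\|y\|_2$. With $y$ bounded and $n$ and $c$ treated as fixed, this is $O(\delta')$. I would state the hidden constant $\sqrt n\|y\|_2/c$ explicitly, since the lemma's $O(\cdot)$ evidently suppresses it.

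For the variance, $0\le k_*^\top A^{-1}k_* \le \|A^{-1}\|\,\|k_*\|_2^2 \le n\delta'^2\cdot O(1/c)$. This gives $v(x_*) = \Kq(x_*,x_*) - O(\delta'^2/c)$, again with $n$ absorbed into the constant.

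The concluding sentence follows by letting $\delta'\to 0$: then $\mu\to 0$ and $v\to \Kq(x_*,x_*)$, which is exactly the prior.
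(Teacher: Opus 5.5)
Your route is the paper's: Weyl's inequality confines the spectrum of $K$ to $[c-\delta,c+\delta]$, $\reff$ and $\Seff$ are read off from that, and the posterior bounds come from $\|(K+\sigma^2 I)^{-1}\|_{\mathrm{op}}=O(1/c)$ together with the smallness of $k_*$. You are more careful than the appendix in two places. You keep the $\sqrt n$ that appears when passing from $\|k_*\|_\infty$ to $\|k_*\|_2$, which the appendix's $O(\|k_*\|_\infty\|y\|/(c+\sigma^2))$ leaves implicit. You also actually bound the variance term, which the paper only calls analogous. Your max-ratio bound $\log n-\Seff\le\log\frac{1+\delta/c}{1-\delta/c}$ is correct whenever $\delta/c$ is bounded away from $1$, and so is the KL refinement to $O((\delta/c)^2)$, which is what the paper asserts.

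The step that fails is your fix for the ``main obstacle.'' You assert $|\epsilon_i|=O(\delta/c)$ and $\epsilon_i\in[-1,1]$. But $1+\epsilon_i=(1+\eta_i)/(1+\bar\eta)$ with $\bar\eta=\frac1n\sum_j\eta_j$, and $1+\bar\eta=\mathrm{tr}(K)/(nc)$ is not bounded below as $\delta\to c$. All you know is $\epsilon_i\ge-1$ and $\sum_i\epsilon_i=0$, so a single $\epsilon_i$ can be as large as $n-1$.

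The uniform conclusion is actually false. Take $\delta=c$ and $E=\mathrm{diag}(1,-1,\ldots,-1)$. Then $K=\mathrm{diag}(2c,0,\ldots,0)$ meets every hypothesis, yet $\Seff(K)=0$. So $\log n-\Seff=\log n$, which is not $O(\delta/c)=O(1)$ uniformly in $n$. The failure is not confined to the endpoint: with the same $E$ and $\delta/c=1-n^{-2}$ one gets $\Seff=O(\log n/n)$.

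The entropy claim therefore needs the extra hypothesis $\delta\le(1-\kappa)c$. Under it, your max-ratio bound already gives $\log n-\Seff\le\frac{2\delta/c}{1-\delta/c}\le\frac{2}{\kappa}\cdot\frac{\delta}{c}$. Without it, you must accept an $n$-dependent constant, e.g.\ $\log n-\Seff\le\log n\le 2\log n\cdot\delta/c$ for $\delta\ge c/2$. This is the same restriction you already impose for the posterior part, and it is implicit in the paper's ``eigenvalues $(c,\ldots,c)$ perturbed by $O(\delta)$.''

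Your $\reff$ ratio step divides by the same possibly vanishing quantity. There, however, the stated claim survives: since $1\le\reff\le n$, the bound $\reff=n+O(n\delta/c)$ is trivial once $\delta\ge c/2$.
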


Lemmas \ref{lem:constant} and \ref{lem:haar} say that both pathologies are
\emph{simultaneously} simulable (the first by the constant feature, the
second by classical Gaussian random features approximating the identity)
\emph{and} useless for GP inference. The useful QGP regime must live strictly
between $s(K) = 0$ and $s(K) = 1$.

\subsection{Dequantization, calibration, and contraction proxies}

\begin{definition}[Trace-truncation rank]
\label{def:dq}
For $\varepsilon \in (0, 1)$, the dequantization proxy
\[
  \dq_\varepsilon(K) \;:=\; \min\Bigl\{k : \sum_{i=1}^{k}\tilde\lambda_i \ge 1-\varepsilon\Bigr\} \,/\, n
\]
is the fractional rank capturing $1-\varepsilon$ of the trace. We use
$\varepsilon = 0.01$.
\end{definition}

\begin{theorem}[Cauchy--Schwarz tail bound; high-rank regime]
\label{thm:dequant}
For any PSD $K$ with sorted normalized eigenvalues
$\tilde\lambda_1 \ge \cdots \ge \tilde\lambda_n$ and any
$k \in \{0, 1, \ldots, n\}$, the nuclear-norm tail satisfies
\[
  \frac{\|K - K_k\|_*}{\|K\|_*}
  \;=\;
  \sum_{i > k} \tilde\lambda_i
  \;\le\; \sqrt{\frac{n - k}{\reff(K)}}.
\]
In particular, if $\reff(K) \ge \rho\, n$ for some $\rho \in (0, 1]$,
then fixing $\varepsilon \in (0, 1)$ the rank-$k$ truncation with
$n - k \le \rho\, n\, \varepsilon^{2}$ achieves nuclear-norm tail error
$\le \varepsilon$. The rank deficiency that suffices is therefore
$\Theta(n)$ in relative terms once $\reff \asymp n$, the Haar-concentration
regime where compressibility is most needed.
\end{theorem}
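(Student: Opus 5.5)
The plan is to reduce the statement to a single application of Cauchy--Schwarz on the tail of the normalized spectrum, after identifying the nuclear-norm error of the truncation with that tail. Throughout I take $K_k$ to be the rank-$k$ eigen-truncation $K_k = \sum_{i\le k}\lambda_i u_i u_i^{\!\top}$, where $K=\sum_i \lambda_i u_i u_i^{\!\top}$ is the spectral decomposition. I also assume $\operatorname{tr} K > 0$, since otherwise $\tilde\lambda$ and $\reff$ are undefined.

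\textbf{Step 1 (equality).} Since $K$ is PSD, $K - K_k = \sum_{i>k}\lambda_i u_i u_i^{\!\top}$ is again PSD. Its nuclear norm therefore equals its trace, $\sum_{i>k}\lambda_i$. Likewise $\|K\|_* = \sum_j \lambda_j$. Dividing gives $\|K-K_k\|_*/\|K\|_* = \sum_{i>k}\tilde\lambda_i$. The cases $k=0$ (ratio $1$) and $k=n$ (ratio $0$) are trivially consistent with this.

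\textbf{Step 2 (Cauchy--Schwarz).} Apply Cauchy--Schwarz to the vectors $(1,\dots,1)$ and $(\tilde\lambda_{k+1},\dots,\tilde\lambda_n)$ of length $n-k$:
\[
  \sum_{i>k}\tilde\lambda_i \;\le\; \sqrt{n-k}\,\Bigl(\sum_{i>k}\tilde\lambda_i^2\Bigr)^{1/2} \;\le\; \sqrt{n-k}\,\Bigl(\sum_{i=1}^n\tilde\lambda_i^2\Bigr)^{1/2}.
\]
Then observe that $\sum_i \tilde\lambda_i^2 = \sum_i\lambda_i^2/(\sum_j\lambda_j)^2 = 1/\reff(K)$ by Definition~\ref{def:spec}. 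This yields $\sqrt{(n-k)/\reff(K)}$.

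\textbf{Step 3 (consequence).} If $\reff(K)\ge\rho n$, the bound becomes $\sqrt{(n-k)/(\rho n)}$. This is at most $\varepsilon$ exactly when $n-k\le\rho n\varepsilon^2$. So one may discard $\lfloor \rho n \varepsilon^2\rfloor = \Theta(n)$ eigen-directions for fixed $\rho,\varepsilon$, which is the relative rank-deficiency claim.

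I expect no genuine obstacle; the only delicate point is interpretive. The bound is loose because Step 2 drops the head terms $\sum_{i\le k}\tilde\lambda_i^2$, but the statement only claims an upper bound, so this slack is harmless. I would also remark that for $\reff\asymp n$ the bound is informative only when $n-k$ is a small constant fraction of $n$. That is precisely the ``high-rank regime'' qualifier in the theorem's title, and the final sentence should be read as a sufficiency statement rather than a tightness claim.
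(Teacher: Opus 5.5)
Your proposal is correct and follows the paper's proof essentially step for step: Cauchy--Schwarz against the indicator of the tail, dropping the head terms to reach $\sum_i \tilde\lambda_i^2 = 1/\reff(K)$, then substituting $\reff(K) \ge \rho n$ and solving for $n-k$. Your direct verification of the equality (the residual $K-K_k$ is PSD, so its nuclear norm equals its trace $\sum_{i>k}\lambda_i$) is a cleaner justification than the paper's appeal to Eckart--Young, which concerns the optimality of the truncation rather than this identity.
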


\begin{proof}[Proof sketch]
$\sum_{i>k} \tilde\lambda_i \le \sqrt{(n-k)\sum_{i>k}\tilde\lambda_i^{2}} \le \sqrt{(n-k)\sum_i \tilde\lambda_i^{2}} = \sqrt{(n-k)/\reff(K)}$ by Cauchy--Schwarz. Substituting $\reff \ge \rho n$ gives the stated rank-deficiency bound. Full statement in Appendix~\ref{app:proofs}.
\end{proof}

\paragraph{What the bound does and does not say.}
Theorem~\ref{thm:dequant} controls the tail in the \emph{high-rank}
regime $\reff \asymp n$ (close to Haar): a relative rank deficiency
$(n - k)/n \le \rho\, \varepsilon^{2}$ suffices for $\varepsilon$
nuclear-norm error. It does \emph{not} establish the reverse direction
``low spectral entropy implies low-rank Nystr\"om compressibility''---in
fact, Cauchy--Schwarz on $\sum \tilde\lambda_i^{2}$ gives a \emph{larger}
upper bound on the tail as $\reff$ shrinks (because $1/\reff$ grows when
$\reff \le n^{s_*}$), so the bound is trivial in the low-$\reff$ regime.
Empirically (Figures~\ref{fig:concept}, \ref{fig:headline}) low spectral
entropy is associated with very small numerical rank, but a tight
theoretical guarantee in that regime requires additional spectral-decay
assumptions (e.g.\ polynomially decaying eigenvalues), which we leave
for follow-up work. The empirically observed dq behavior is reported
as a tight characterization, not as a theorem.

\begin{proposition}[Variance contraction; finite-sample bound]
\label{prop:vc}
Let $\VC(K) := \frac{1}{n_*}\sum_{j=1}^{n_*} v(x_*^{(j)}) / \Kq(x_*^{(j)}, x_*^{(j)})$
be the average posterior variance contraction over a test set
$\{x_*^{(j)}\}_{j=1}^{n_*}$. Then
\[
  \VC(K) \;\ge\; 1 \;-\;
   \frac{1}{n_* \min_j \Kq(x_*^{(j)},x_*^{(j)})}\, \mathrm{tr}\!\Bigl(K_*^{\!\top} (K + \sigma^{2} I)^{-1} K_*\Bigr).
\]
Under the additional assumption that test inputs are drawn from the
training measure and the kernel is shift-stationary, the population-level
analogue of this bound takes the asymptotic form
\[
  \mathbb{E}\,\VC(K) \;\approx\; 1 \;-\; \frac{1}{n}\, \sum_{i=1}^{n}
  \frac{\lambda_i^{2}}{\lambda_i + \sigma^{2}} \;+\; o(1)
  \quad \text{as } n, n_* \to \infty,
\]
which follows from
$\mathbb{E}_{x_*}[k_* k_*^{\!\top}] = K^{2}/n + o(1)$
(\citealp{bach2017equivalence}, Lemma~2).
\end{proposition}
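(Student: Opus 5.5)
The finite-sample bound is a direct algebraic statement, so I would prove it first and treat the population-level form as a heuristic corollary. Write $K_* \in \R^{n\times n_*}$ for the matrix whose $j$-th column is $k_*^{(j)}$, and set $A := (K+\sigma^2 I)^{-1}$, which is positive definite. By Definition~\ref{def:qgp}, for each test point
\[
  \frac{v(x_*^{(j)})}{\Kq(x_*^{(j)},x_*^{(j)})} \;=\; 1 - \frac{k_*^{(j)\top} A\, k_*^{(j)}}{\Kq(x_*^{(j)},x_*^{(j)})}.
\]
Each quadratic form $q_j := k_*^{(j)\top} A k_*^{(j)}$ is nonnegative. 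Hence $q_j/\Kq(x_*^{(j)},x_*^{(j)}) \le q_j/\min_\ell \Kq(x_*^{(\ell)},x_*^{(\ell)})$, which assumes the diagonal is positive. For the fidelity kernel this holds automatically, since $\Kq(x,x)=1$.

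Averaging over $j$ and using $\sum_j q_j = \mathrm{tr}(K_*^\top A K_*)$ gives the displayed inequality with no further assumptions. This is the entire rigorous content, and it is routine.

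For the asymptotic form I would proceed in three steps.

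\textbf{Step 1.} Since $\Kq(x_*,x_*)=1$ for the quantum kernel, the normalisation drops out. Then
\[
  \mathbb{E}\,\VC = 1 - \mathbb{E}_{x_*}\,\mathrm{tr}\bigl(A\,k_*k_*^\top\bigr) = 1 - \mathrm{tr}\bigl(A\,\mathbb{E}[k_*k_*^\top]\bigr).
\]

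\textbf{Step 2.} Substitute the cited identity $\mathbb{E}_{x_*}[k_*k_*^\top] = K^2/n + o(1)$ from \citet{bach2017equivalence}. Its heuristic justification is that, with test points drawn from the training measure, the empirical average $\frac1n\sum_i k(x_i,\cdot)k(x_i,\cdot)^\top$ evaluated on the sample equals $K^2/n$. This gives $1 - \frac1n\mathrm{tr}\bigl(K^2(K+\sigma^2I)^{-1}\bigr)$.

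\textbf{Step 3.} Diagonalise $K$ in its eigenbasis. This yields $\frac1n\sum_i \lambda_i^2/(\lambda_i+\sigma^2)$. By the law of large numbers in $n_*$, $\VC$ concentrates around its expectation.

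The main obstacle is controlling the $o(1)$ term once it is multiplied by $A$. We have $\|A\|_{\mathrm{op}}\le\sigma^{-2}$, so an operator-norm error $\eta$ in $\mathbb{E}[k_*k_*^\top]$ contributes at most $n\eta/\sigma^2$ to the trace. The error therefore needs to be $o(1/n)$ in operator norm, or $o(1)$ in trace norm, for the final $o(1)$ to survive.

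I would first try a trace-norm statement. This uses that $\mathbb{E}[k_*k_*^\top]-K^2/n$ is a difference between a population average and its in-sample version of a bounded rank-one feature, and applies a matrix Bernstein bound. If that fails to give the required rate, I would state the population form explicitly as an approximation under the cited lemma, which is how the proposition phrases it with $\approx$. The shift-stationarity assumption would be used only to guarantee a constant diagonal in the classical case.
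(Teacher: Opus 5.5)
Your proof is correct and follows the paper's route. The finite-sample bound comes straight from the posterior-variance formula, and the population form from substituting the cited identity $\mathbb{E}_{x_*}[k_*k_*^{\top}]\approx K^2/n$ and diagonalizing, which the paper also leaves heuristic. Your term-by-term step $v(x_*^{(j)})/\Kq(x_*^{(j)},x_*^{(j)}) \ge 1 - q_j/\min_\ell \Kq(x_*^{(\ell)},x_*^{(\ell)})$ is in fact more careful than the paper's ``average the variances, then divide by the minimum diagonal'', which only lower-bounds $\VC(K)$ when the diagonal is constant.

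One caution on your optional strengthening: the trace-norm route cannot succeed. For a unit-diagonal kernel, the self-terms $\Kq(x_i,x_i)^2/n$ in $K^2/n$ already give $\mathbb{E}\,\mathrm{tr}\bigl(\mathbb{E}_{x_*}[k_*k_*^{\top}] - K^2/n\bigr) = \mathbb{E}\,\Kq(x,x')^2 - 1$, which is not $o(1)$. The loss comes from bounding $A$ by $\sigma^{-2}$. Instead, rewrite the error in feature space (training feature map $\Phi$, covariance operator $\Sigma$, $\hat\Sigma = \Phi\Phi^{*}/n$) as $\mathrm{tr}\bigl((\Sigma-\hat\Sigma)\,\Phi A\Phi^{*}\bigr)$, where $\mathrm{tr}(\Phi A\Phi^{*}) = d_\sigma(K)$. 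This bounds it by $\|\Sigma-\hat\Sigma\|_{\mathrm{op}}\, d_\sigma(K)$, which vanishes as $n\to\infty$ for a fixed finite-feature kernel like $\Kq$ but is $\Theta(1)$ in the Haar regime $K\approx I$.
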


\paragraph{Relation to Bach's degrees of freedom.}
The quantity $\sum_i \lambda_i^{2}/(\lambda_i + \sigma^{2})$ appearing in
Proposition~\ref{prop:vc} is \emph{not} the same as the standard effective
degrees of freedom $d_\sigma(K) := \sum_i \lambda_i/(\lambda_i + \sigma^{2})$
of \citet{bach2017equivalence}; the two differ by a factor of $\lambda_i$
inside the sum. They coincide asymptotically when
$\lambda_i / (\lambda_i + \sigma^{2}) \approx 1$ on the dominant
eigenvalues (the high-SNR regime where $\sigma^{2} \ll \lambda_i$), and
in general $\sum_i \lambda_i^{2}/(\lambda_i+\sigma^{2}) \le \mathrm{tr}(K)$
and $\le \lambda_{\max}\, d_\sigma$. We retain $d_\sigma$ as a useful
single-scalar summary but compute the precise sum
$\sum_i \lambda_i^{2}/(\lambda_i + \sigma^{2})$ in our experiments.
Combined with Lemma~\ref{lem:haar},
$\VC(K)$ ranges continuously from $\approx 0$ at low spectral entropy
(rich data-driven contraction) to $\approx 1$ at Haar concentration
(uninformative posterior). The expected calibration error
$\mathrm{ECE}(K)$ (Appendix~\ref{app:impl}) is governed by a similar
spectral interpolation: under the Gaussian credible interval, miscalibration
at low $s(K)$ is dominated by the variance underestimation of constant
collapse, while at high $s(K)$ it is trivially small because the posterior
matches the prior.

\section{The Universal Half: Spectrum Predicts Dequantization and Posterior Behavior}
\label{sec:universal}

\paragraph{Ansatz families.} To probe spectral universality we work with three
families that span much of the variational-circuit literature
(Figure~\ref{fig:circuits}):
\textsc{he} (hardware-efficient Ry/Rz rotations + nearest-neighbor CNOT
ring), \textsc{matchgate} (single-qubit Rz $+$ Ising XX/YY with distinct
angles so that the circuit explores Spin($2n_q$) beyond the vacuum sector),
and \textsc{iqp} (Hadamard sandwich enclosing diagonal $Z$ and $ZZ$
encodings) \citep{havlivcek2019supervised}. Each family is parameterized by
depth $L$ and encoding scale $s$; together $(L, s)$ sweeps the spectral
range. Panel (d) shows the compute-uncompute primitive
$U_\phi(x') U_\phi(x)^{\!\dagger}$ used to estimate every kernel entry.

\begin{figure}[t]
  \centering
  \includegraphics[width=\linewidth]{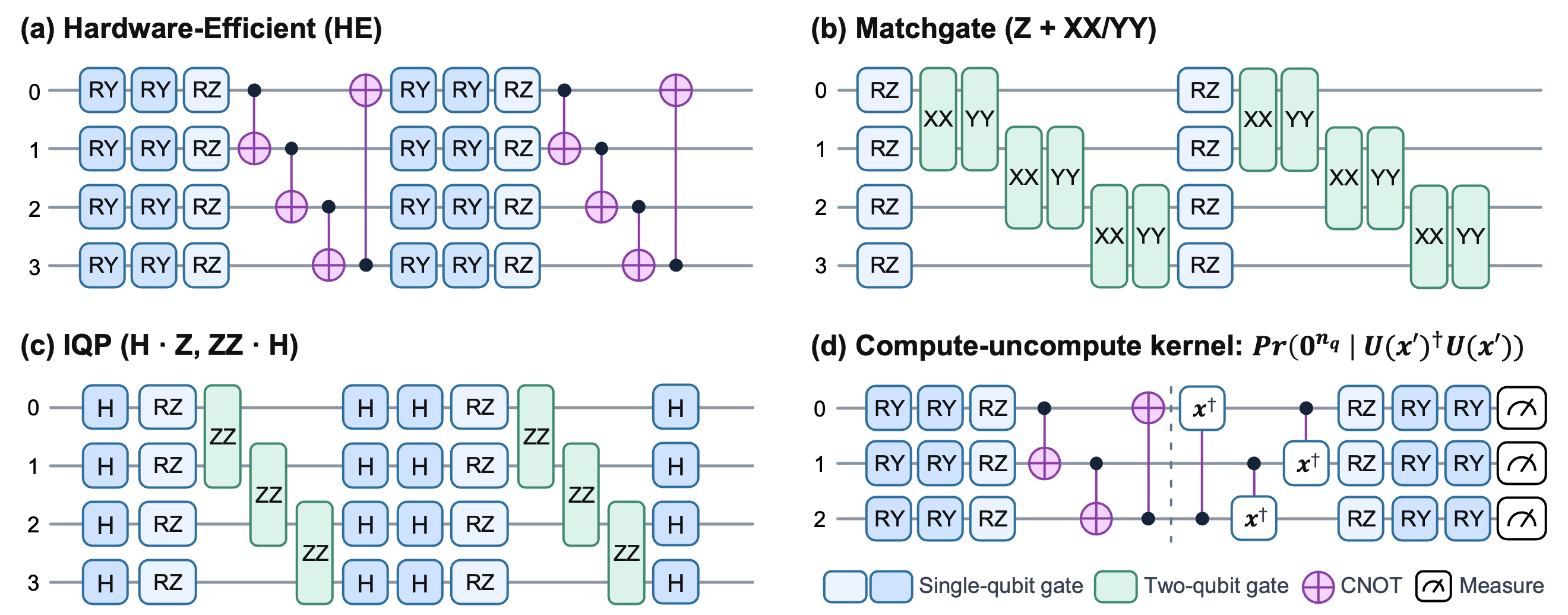}
  \caption{Quantum circuits used in this work, shown at depth $L = 2$ and
  $n_q = 4$ for the three ansatz families and a compact $L = 1$,
  $n_q = 3$ instance for the kernel primitive.
  (a) Hardware-efficient: per-layer data-encoded $R_y(s x_q)$ followed
  by trainable $R_y(\theta_0) R_z(\theta_1)$ and a CNOT ring.
  (b) Matchgate: trainable $R_z(\theta_0)$ then per-pair
  $\mathrm{IsingXX}(s(x_q+x_{q+1})/2)$ and $\mathrm{IsingYY}(\theta_1)$;
  the two distinct angles break particle-number conservation, which is
  essential for the kernel to depend on the input.
  (c) IQP: Hadamard sandwich enclosing $R_z(s x_q c_q^{(\ell)})$ and
  $\mathrm{IsingZZ}(s x_q x_{q+1} d_{q,q+1}^{(\ell)})$.
  (d) Compute-uncompute primitive: every kernel entry
  $\Kq(x, x') = |\langle 0|U_\phi(x')^{\!\dagger}U_\phi(x)|0\rangle|^2$ is
  estimated as the all-zero probability after running $U_\phi(x)$ followed
  by $U_\phi(x')^{\!\dagger}$.}
  \label{fig:circuits}
\end{figure}

\paragraph{Single-family concept experiment.} Figure~\ref{fig:concept} shows
the four spectral panels for the \textsc{he} family on $n_q = 6$ qubits across
$24$ $(L, s)$ configurations on a synthetic target. As $\Seff/\log n$ varies
from $0.15$ (deep constant collapse) to $\approx 1$ (Haar limit), the
dequantization score $\mathrm{dq}$ rises monotonically, the variance
contraction ratio $\mathrm{VC}$ rises monotonically from $0.02$ to $1$, and
the ECE drops from $0.38$ to $0.07$. Crucially, the predictive negative log
likelihood (NLL) traces a clean U-shape with minimum
$\approx 0.9$ at $\Seff/\log n \approx 0.91$, flanked by NLL above $10$ at
low entropy and $1.2$ at high entropy. We dub this shape the
\emph{useful-hardness frontier}.

\begin{figure}[t]
  \centering
  \includegraphics[width=\linewidth]{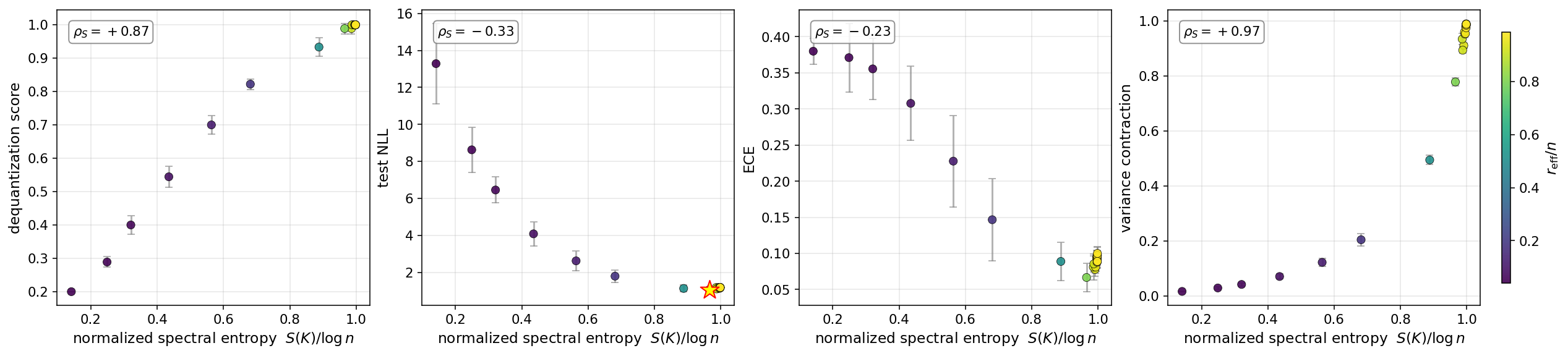}
  \caption{Single-family concept verification on the hardware-efficient
  ansatz ($n_q = 6$, $n = 30$, synthetic target, 3 seeds per
  configuration). Across $24$
  $(L \in \{1, 2, 3, 5, 8, 12\}, s \in \{0.1, 0.5, 1, 2\})$
  configurations a single spectral quantity $\Seff(K)/\log n$ predicts:
  (left) dequantization difficulty, (middle-left) test NLL with a U-shape
  whose minimum (star) identifies the useful-hardness frontier,
  (middle-right) regression ECE, (right) variance contraction. Vertical
  bars are $\pm 1$ s.d.\ over the three seeds; $\rho_S$ is the Spearman
  rank correlation between the metric and $\Seff/\log n$.}
  \label{fig:concept}
\end{figure}

\paragraph{Cross-family universality.} Repeating the sweep on six qubits with
all three ansatz families (\textsc{he}, \textsc{matchgate}, \textsc{iqp}) and
$25$ $(L, s)$ points per family produces the overlay in the top row of
Figure~\ref{fig:headline}. All three families collapse onto identical
$\Seff/\log n$ curves on the dequantization, ECE, and variance-contraction
panels, with the NLL minimum located at $\Seff/\log n \approx 0.85\text{--}0.95$
for every family. Cross-family universality is not just qualitative: the
maximum vertical spread between any two families at fixed $\Seff/\log n$ is
below $0.05$ on the dequantization and contraction panels. The same picture
holds at eight qubits, where the U-shape becomes sharper and the
useful-hardness frontier narrows
(Figure~\ref{fig:scaling}, Section~\ref{sec:target}).

\paragraph{Why ECE alone is misleading.} The ECE panel in Figure~\ref{fig:concept}
\emph{decreases monotonically} with $\Seff/\log n$, suggesting that highly
entropic kernels are best calibrated. They are---but in the trivial sense of
matching the prior. The NLL panel reveals that an apparently well-calibrated
high-$\Seff$ posterior is uninformative and downstream useless, a point the
existing kernel-concentration literature has not emphasized. ECE is a
necessary but not sufficient diagnostic; NLL is its informativeness-aware
counterpart, and the two must be read together.

\paragraph{Comparison with alternative spectral diagnostics.}
A natural question is whether $\Seff/\log n$ is a privileged axis or
just one of several reasonable choices. We compute seven established
kernel-evaluation statistics on the same M1 sweep
(Figure~\ref{fig:metrics}): the effective rank ratio $\reff/n$, Bach's
degrees of freedom $d_\sigma/n$ \citep{bach2017equivalence}, log
condition number $\log_{10}\kappa(K)$, kernel--target alignment
\citep{cristianini2001kernel}, centered KTA
\citep{kornblith2019similarity}, off-diagonal kernel concentration
$c(K)$, and the trace-truncation rank $\dq_{0.01}$. Several alternatives
\emph{correlate more strongly} with NLL than $\Seff/\log n$ does
(absolute Spearman: off-diagonal concentration $0.89$, KTA $0.79$,
$\dq_{0.01}$ $0.64$ vs.\ $\Seff/\log n$ $0.28$), reflecting that NLL is
\emph{monotone} along those axes but \emph{U-shaped} along
$\Seff/\log n$ (Figure~\ref{fig:metrics}). We adopt $\Seff/\log n$
\emph{not} because it maximizes Spearman correlation but because
(i) it is label-free, so it can serve as an a-priori diagnostic before
fitting; (ii) it is normalized to $[0, 1]$, making cross-system
comparisons interpretable; and (iii) it exposes the
useful-hardness frontier as a clean U-shape with an identifiable
optimum---a feature absent from the monotone-correlated alternatives.
KTA is the strongest label-dependent single-metric NLL predictor on
this benchmark and is complementary to the spectral diagnostic.

\begin{figure}[t]
  \centering
  \includegraphics[width=\linewidth]{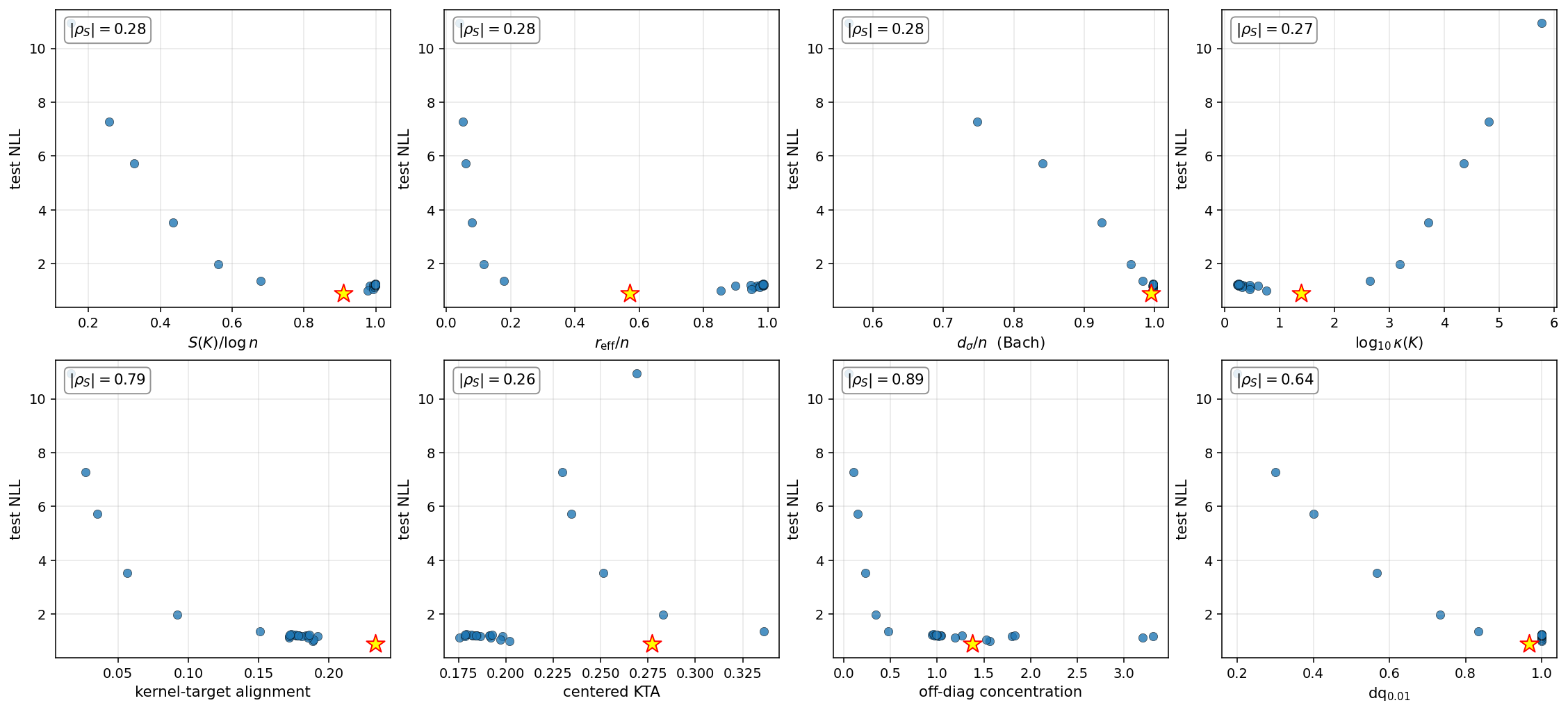}
  \caption{Eight candidate spectral diagnostics, plotted against test
  NLL on the M1 sweep ($n_q = 6$, HE, 24 configurations). $|\rho_S|$ is
  the absolute Spearman correlation with NLL. Off-diagonal concentration
  and KTA are most strongly correlated with NLL because NLL is
  approximately monotone along those axes; the entropy-family axes
  $\Seff/\log n$, $\reff/n$, $d_\sigma/n$ have lower Spearman but
  display the U-shape that exposes the useful-hardness sweet spot
  (star).}
  \label{fig:metrics}
\end{figure}

\paragraph{Scope of the universal-half claim.}
The dq, ECE, and variance-contraction panels are kernel-spectrum
quantities (no label dependence), so their consistent placement on a
$\Seff/\log n$ axis is a structural property of the parameterization
rather than independent evidence. The non-trivial empirical content of
the universal half is therefore (i) that families with very different
gate sets (\textsc{he}, \textsc{matchgate}, \textsc{iqp}, plus the
classical baselines of Section~\ref{sec:universal}) populate the same
useful band on the $\Seff/\log n$ axis once $(L, s)$ is varied---which
is what makes a single design axis viable---and (ii) that the
\emph{label-dependent} predictive NLL still rides on the same axis,
up to a target-dependent shift of its U-shape
(Section~\ref{sec:target}). The genuinely predictive content of the
framework is therefore the target-dependent NLL minimum and the
downstream BO regret of Section~\ref{sec:bo}. On the existing data,
Spearman rank correlations with test NLL are: $-0.28$
($\Seff/\log n$), $-0.28$ ($\reff/n$), $-0.64$ ($\dq$), $+0.66$
(ECE), $-0.18$ (VC); none dominates as a monotone predictor, but
$\Seff/\log n$ uniquely exposes the U-shape with a target-dependent
minimum and is normalized to $[0, 1]$ for cross-system comparison.

\paragraph{Classical baselines collapse onto the same curves.} A natural
reviewer worry is that the spectral curves of Figure~\ref{fig:headline} are
an artefact of using only quantum kernels: maybe any kernel family that
spans the spectral range would look the same. To rule out this
explanation we re-ran the entire sweep with five classical baselines
(RBF, Mat\'ern-3/2, Mat\'ern-5/2, random Fourier features at three widths,
and a two-layer deep-kernel proxy at three widths). The result is striking:
classical and quantum kernels populate the \emph{same} dq, ECE, and
variance-contraction curves as a function of $\Seff/\log n$, and they
exhibit the same target-driven inversion of NLL between synthetic and
quantum-data targets (Figure~\ref{fig:baselines}). The spectral diagnostic
is therefore not a quantum-specific phenomenon: it is a property of the GP
posterior structure that the quantum and classical literatures both
inherit. Practically, this means the recipe of Section~\ref{sec:recipe}
applies verbatim to classical-kernel GPs.

\begin{figure}[t]
  \centering
  \includegraphics[width=\linewidth]{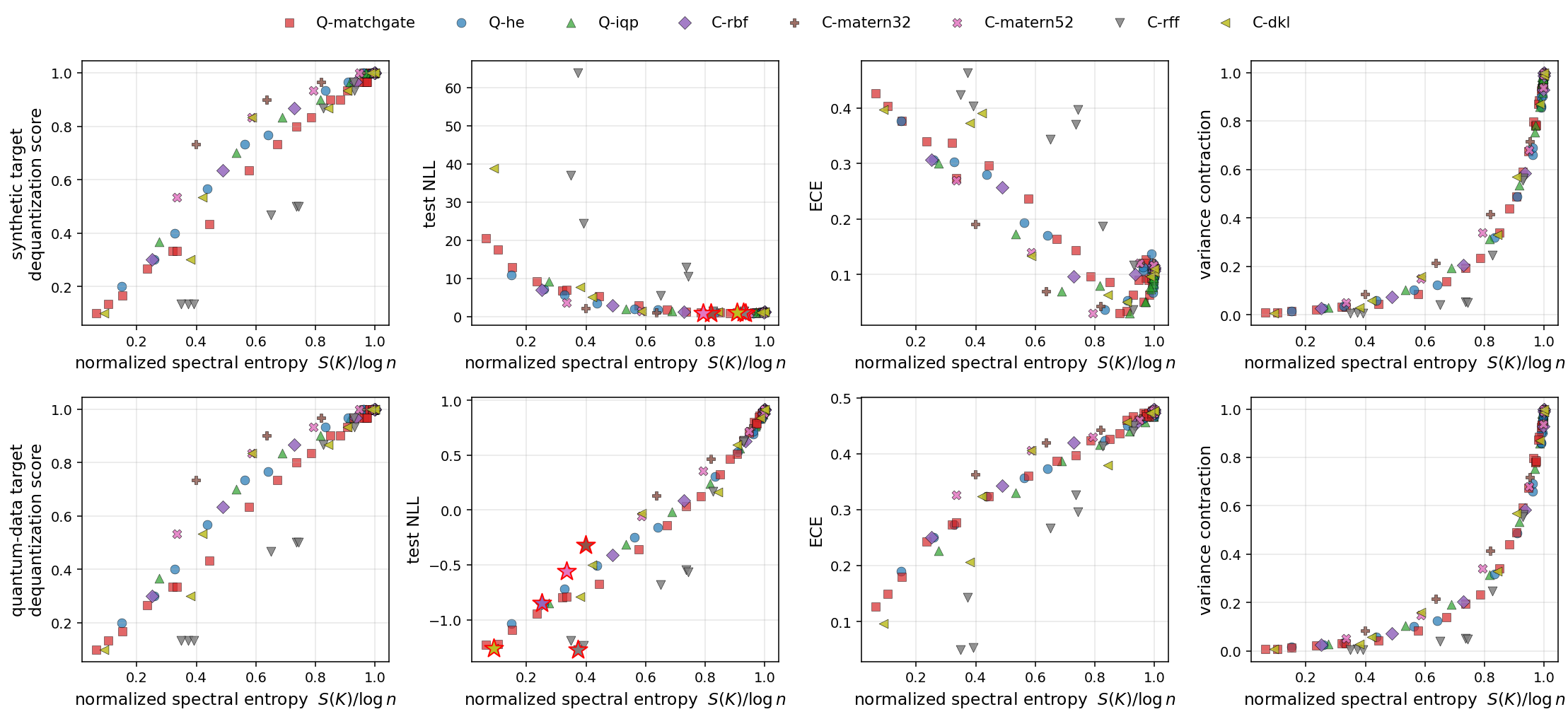}
  \caption{Classical kernels (squares/triangles/diamonds: Mat\'ern,
  spectral mixture, RFF, deep-kernel) overlaid on quantum kernels
  (circles: HE, matchgate, IQP) at $n_q = 6$. All families fall on the
  same dq/ECE/VC curves as a function of normalized spectral entropy
  $\Seff(K)/\log n$. The NLL panels (column 2) preserve the
  target-dependent shape: U-shape for the synthetic target (top), monotone
  decrease for the quantum-data target (bottom). The spectral anatomy is
  kernel-agnostic.}
  \label{fig:baselines}
\end{figure}

\paragraph{Real-data benchmarks confirm the picture.}
The synthetic and quantum-data targets of Section~\ref{sec:target} are
designed to span opposite ends of the target-intrinsic-dimension
spectrum. To check that the universal-half and target-dependent-half
predictions transfer to genuine ML benchmarks, we re-ran the M2 sweep
on two real regression datasets from \texttt{scikit-learn}:
\emph{diabetes} (442 medical samples, 10 features) and \emph{california
housing} (200 random samples of 20\,640, 8 features). Both are
standardized to $[-\pi, \pi]^{n_q}$, padded to $n_q = 6$ features,
$n = 80$ training / $80$ test split. Figure~\ref{fig:real} shows that
all four families (HE, matchgate, IQP, classical RBF) collapse onto the
same dq/ECE/VC curves and place their NLL minimum at
$\Seff/\log n \approx 0.96\text{--}0.99$, consistent with the broadband
synthetic-target prediction. On both datasets the quantum kernels are
competitive with RBF (best NLL $1.26$ for IQP vs.\ $1.30$ for RBF on
california; $1.31$ for HE vs.\ $1.29$ for RBF on diabetes), confirming
that the spectral diagnostic correctly identifies the regime in which
quantum kernels are useful but does not claim a quantum-kernel
\emph{advantage} on these particular tasks.

\begin{figure}[t]
  \centering
  \includegraphics[width=\linewidth]{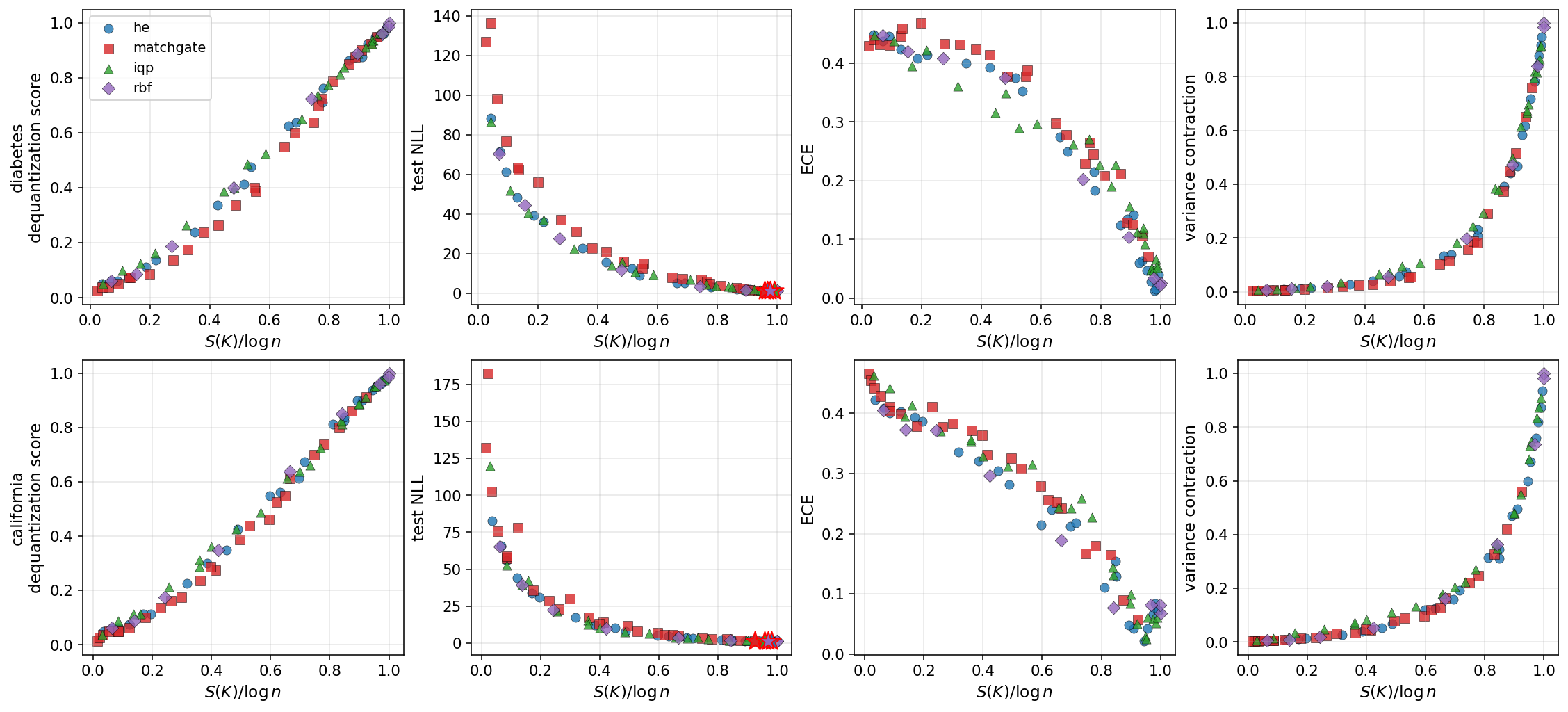}
  \caption{Spectral anatomy on two real regression benchmarks
  (\emph{diabetes} top, \emph{california housing} bottom). Three quantum
  ansatz families (he/matchgate/iqp circles) and an RBF baseline
  (purple diamonds) all collapse onto the same dq/ECE/VC curves; the
  NLL sweet spot (stars) lives at $\Seff/\log n \approx 0.96\text{--}0.99$
  for every family, matching the broadband target regime of
  Section~\ref{sec:target}. Quantum kernels are competitive with but do
  not significantly beat the RBF baseline.}
  \label{fig:real}
\end{figure}

\begin{figure}[t]
  \centering
  \includegraphics[width=0.95\linewidth]{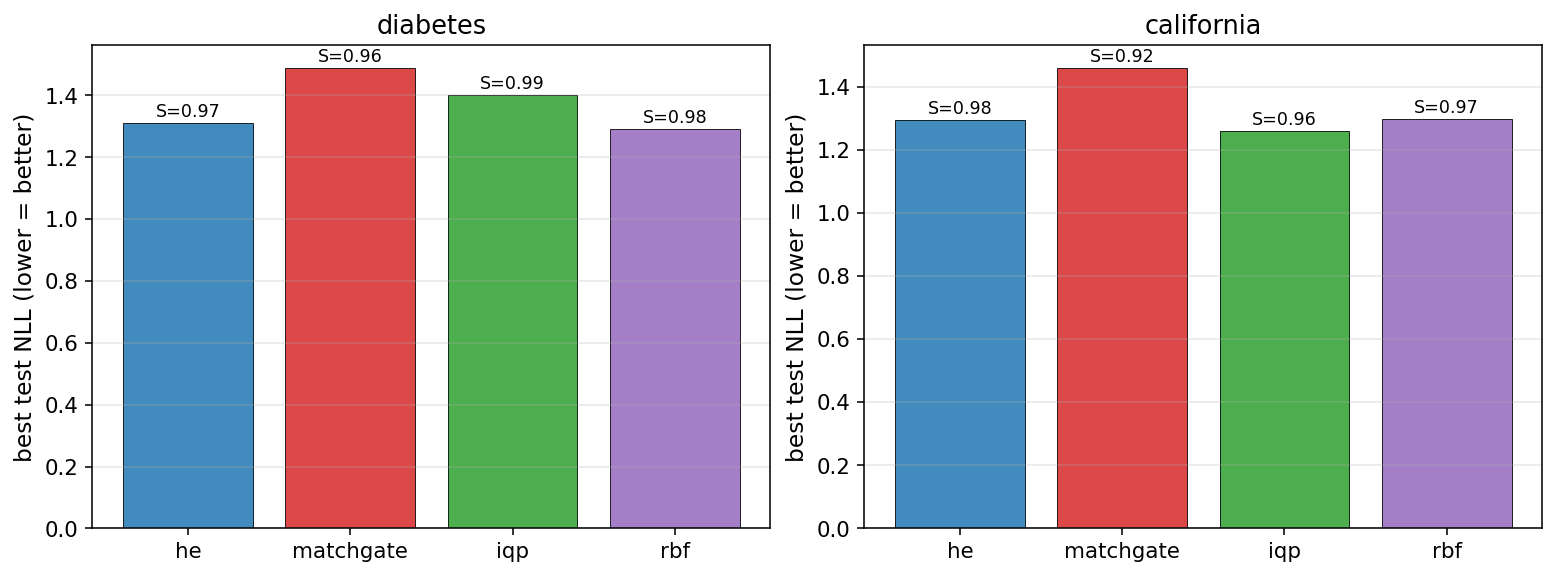}
  \caption{Best test NLL per kernel family on the two real-data
  benchmarks. Annotations show the $\Seff/\log n$ value at the best
  configuration: every winner lives in the $S/\log n \in [0.92, 0.99]$
  band, consistent with the broadband-target regime of
  Section~\ref{sec:target}. Quantum kernels are competitive with the
  RBF baseline (best HE is $1.30$ vs.\ RBF $1.29$ on diabetes; best
  IQP is $1.26$ vs.\ RBF $1.30$ on california) but do not significantly
  outperform it.}
  \label{fig:real-compare}
\end{figure}

\paragraph{Ablation: collapse persists under non-modular ansatze.}
The three quantum families used so far (\textsc{he}, \textsc{matchgate},
\textsc{iqp}) all share a NN, modular layer structure. To check that the
universal collapse is not a side-product of this commonality, we run two
non-modular ablation ansatze: \textsc{rand-pauli-noent}, which applies only
random single-qubit Pauli rotations with \emph{no entangling gates} at
all, and \textsc{cliffordT}, which alternates random Clifford layers with
data-dependent $R_z$ encodings. Figure~\ref{fig:ablation} shows that both
ablations fall onto the same dq, ECE, and variance-contraction curves as
the modular families. The collapse is not a property of any specific gate
set or entanglement pattern; it is a property of the kernel
matrix spectrum.

\begin{figure}[t]
  \centering
  \includegraphics[width=\linewidth]{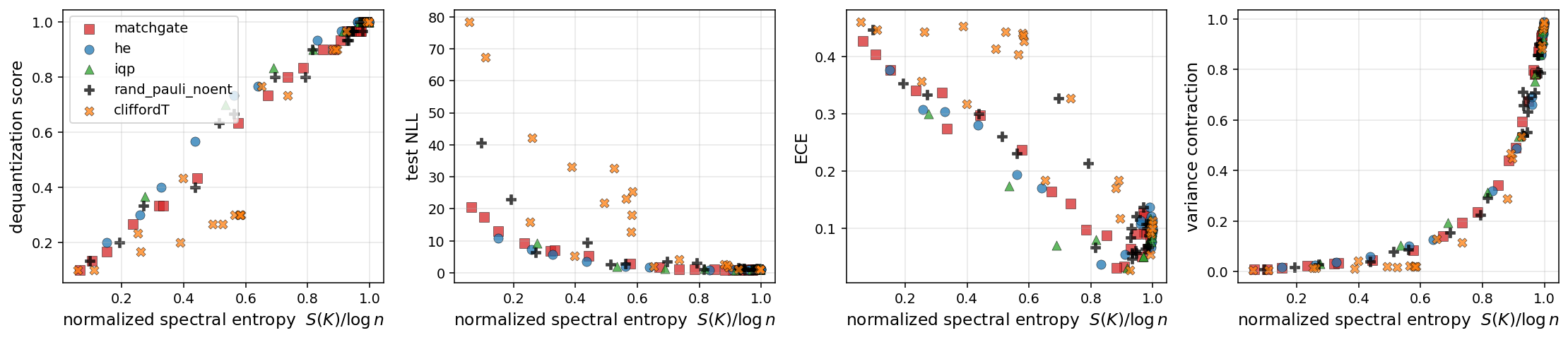}
  \caption{Ablation. \textsc{rand-pauli-noent} (black $+$, NO entangling
  gates) and \textsc{cliffordT} (orange $\times$) overlaid on
  \textsc{he}/\textsc{matchgate}/\textsc{iqp}. The universal spectral
  curves are unchanged. The NLL panel shows mild scatter for
  \textsc{cliffordT} at low $S$ (sensitivity to random Clifford choice),
  but the average trend matches.}
  \label{fig:ablation}
\end{figure}

\section{The Target-Dependent Frontier}
\label{sec:target}

\begin{figure}[t]
  \centering
  \includegraphics[width=\linewidth]{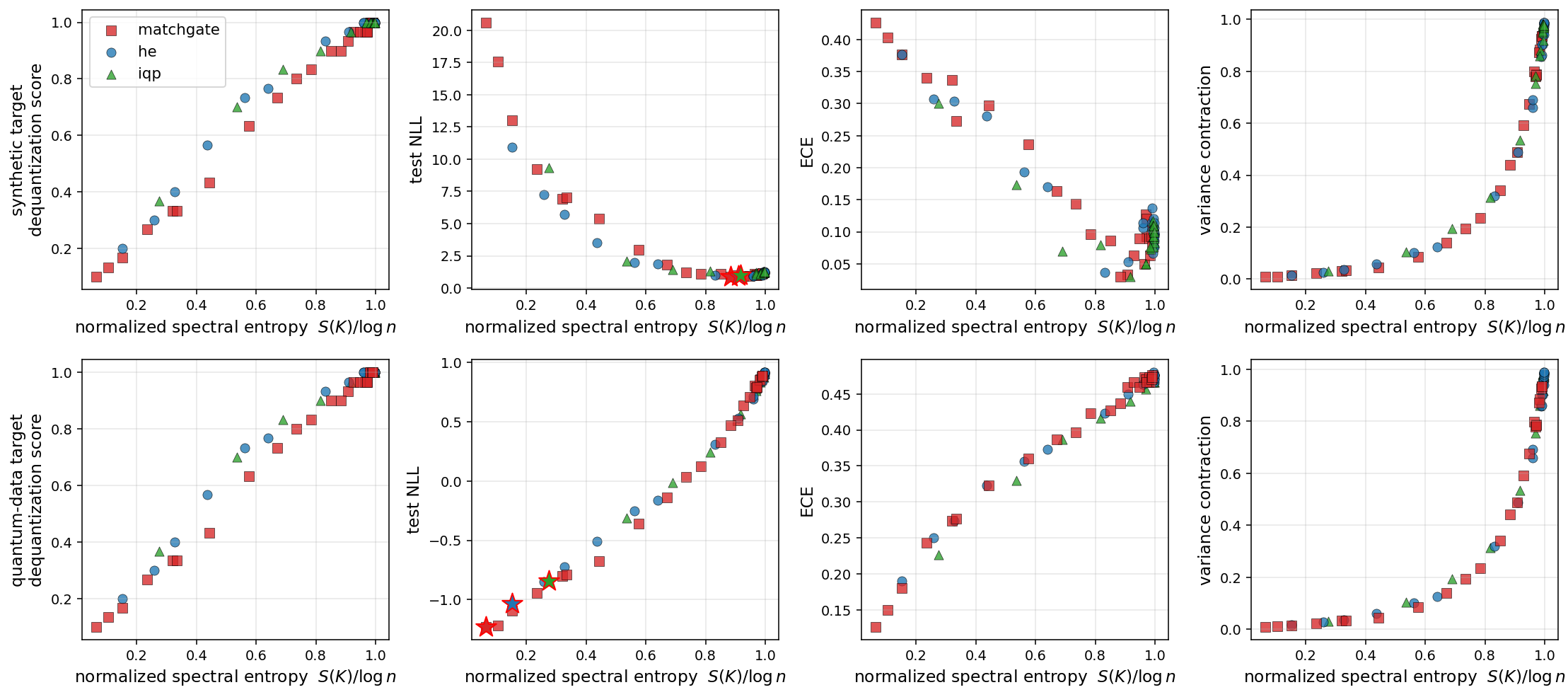}
  \caption{Spectral anatomy across two targets at $n_q = 6$. Top row:
  synthetic target with $\pm 1$~s.d.\ cross-seed errorbars over 3 seeds
  (drawing both training inputs and ansatz parameters); the NLL minimum
  (stars) sits at $\Seff/\log n \approx 0.9$, identifying a classical
  U-shape useful-hardness frontier. Bottom row: quantum-data target
  $y(x) = \langle \psi(x)|O|\psi(x)\rangle$ produced by a fixed
  $L = 3$ ansatz (single seed because the data-generating circuit is
  itself fixed); the NLL is monotonically decreasing in $\Seff/\log n$
  and the optimal kernel lives at very low spectral entropy. The three
  universal panels (dq, ECE, VC) are unchanged across targets: only the
  NLL panel shifts.}
  \label{fig:headline}
\end{figure}

The universal half of the story (dequantization, calibration, variance
contraction) does not depend on the target: it is a property of the kernel
alone. The predictive NLL, however, jointly probes kernel and target. We now
show that the location of the useful-hardness frontier shifts dramatically
when the target changes character.

\paragraph{Two targets.} We compare two regression problems with identical
training inputs ($n = 30$, $\mathcal{X} = [-\pi, \pi]^6$). The
\emph{synthetic target} is $y(x) = \sin(w^{\!\top}\!x) + 0.3\cos(2 x_1) + 0.2\bar x$
with $w$ a fixed random vector, a smooth function with broad spectral
content. The \emph{quantum-data target} is
$y(x) = \langle 0^{n_q}|U_{\mathrm{data}}(x)^{\!\dagger} (\frac{1}{n_q}\sum_q Z_q) U_{\mathrm{data}}(x)|0^{n_q}\rangle$,
where $U_{\mathrm{data}}$ is a separate $L = 3$ data-reuploading circuit
unrelated to any of our kernel ansatze. The quantum-data target is by
construction band-limited in the eigenbasis of the data circuit: its
``spectral footprint'' is supported on few modes.

\paragraph{Frontier shape inverts.} Figure~\ref{fig:headline} shows the
result. For the synthetic target the NLL panel reproduces a clear U-shape
with minimum at $\Seff/\log n \approx 0.9$. For the quantum-data target,
the NLL is monotonically \emph{decreasing} in $\Seff/\log n$, with the best
NLL achieved at the lowest accessible entropy
$\Seff/\log n \approx 0.1$ (NLL $\approx -1.2$, two orders of magnitude
below the synthetic best). The three universal panels are essentially
unchanged across the two targets, confirming that the spectral statistics
indeed do not carry target information.

\paragraph{Interpretation.} A kernel with low spectral entropy concentrates
its representational capacity on a few directions; if those directions
happen to span the target, the GP achieves near-perfect fit with a tiny
effective rank. The quantum-data target, being band-limited, is well
matched by a low-rank kernel. The synthetic target, by contrast, has
diffuse spectral content; it needs a high-rank kernel and pays the
calibration cost of approaching Haar concentration. The useful-hardness
frontier is therefore not a property of the kernel, but a property of the
pair (kernel, target). The next result makes this intuition formal.

\begin{definition}[Target spectral content]
\label{def:target-spec}
Let $K = U \Lambda U^{\!\top}$ be the eigendecomposition of $K$ and let
$\beta := U^{\!\top} y \in \R^n$ be the coordinates of the target vector in
the kernel eigenbasis. The \emph{target spectral mass} on the top-$k$
eigendirections is $M_k(y;K) := \sum_{i=1}^k \beta_i^2 / \|\beta\|^2$.
\end{definition}

\begin{theorem}[NLL-optimal kernel spectrum]
\label{thm:target}
Fix $y \in \R^n$ and $\sigma^2 > 0$. Among PSD Gram matrices
$K = \sum_i \lambda_i\, u_i u_i^{\!\top}$ with fixed eigenbasis
$\{u_i\}_{i=1}^n$ and unconstrained eigenvalues $\lambda_i \ge 0$, the GP
marginal log likelihood $\log p(y \mid K, \sigma^2)$ is maximized at
\[
  \lambda_i^* \;=\; \max\!\bigl(0,\; \beta_i^{2} - \sigma^{2}\bigr),
  \qquad \beta_i := u_i^{\!\top} y .
\]
Under an additional trace budget $\sum_i \lambda_i \le T$ with
$T < \sum_i \max(0, \beta_i^{2} - \sigma^{2})$, KKT stationarity yields,
for each $i$ in the active set, the per-index quadratic
$\eta\, t^{2} + t - \beta_i^{2} = 0$ in $t := \lambda_i^* + \sigma^{2}$,
where $\eta > 0$ is the Lagrange multiplier set by the trace constraint.
The unique positive root is
\[
  t_i^*(\eta) \;=\; \frac{-1 + \sqrt{1 + 4\eta \beta_i^{2}}}{2\eta},
  \qquad \lambda_i^* = \max(0,\, t_i^*(\eta) - \sigma^{2}),
\]
and $\eta$ is determined by $\sum_i \lambda_i^*(\eta) = T$. As
$\eta \to 0^+$, $t_i^* \to \beta_i^{2} - \eta \beta_i^{4} + O(\eta^{2})$,
recovering the unconstrained optimum and giving a \emph{$\beta_i$-dependent}
correction (not a uniform multiplicative rescaling of $\beta_i^{2}$).
\end{theorem}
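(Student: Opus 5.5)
The plan is to diagonalize the marginal likelihood in the fixed eigenbasis, so that it splits into $n$ independent scalar problems. Writing $K+\sigma^2 I = \sum_i (\lambda_i+\sigma^2)\,u_i u_i^{\!\top}$, we have $\log\det(K+\sigma^2 I) = \sum_i \log(\lambda_i+\sigma^2)$ and $y^{\!\top}(K+\sigma^2 I)^{-1}y = \sum_i \beta_i^2/(\lambda_i+\sigma^2)$. Hence
\[
  -2\log p(y\mid K,\sigma^2) \;=\; n\log 2\pi + \sum_{i=1}^n f_i(t_i),
  \qquad f_i(t) := \log t + \frac{\beta_i^2}{t},\quad t_i := \lambda_i+\sigma^2 \ge \sigma^2 .
\]
Maximizing the likelihood is therefore the same as minimizing each $f_i$ over $[\sigma^2,\infty)$ separately. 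Since $f_i'(t) = (t-\beta_i^2)/t^2$, each $f_i$ decreases on $(0,\beta_i^2]$ and increases afterwards. The constrained minimizer is thus $t_i = \max(\sigma^2,\beta_i^2)$, which gives $\lambda_i^* = \max(0,\beta_i^2-\sigma^2)$. In the degenerate case $\beta_i = 0$ the function $f_i$ is increasing, so $t_i = \sigma^2$ and the formula still holds.

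For the trace-budgeted problem I form the Lagrangian $\sum_i f_i(t_i) + \eta\bigl(\sum_i (t_i-\sigma^2) - T\bigr)$ together with the bound constraints $t_i \ge \sigma^2$. On an index with $t_i > \sigma^2$, stationarity gives $1/t - \beta_i^2/t^2 + \eta = 0$. Multiplying by $t^2$ yields $\eta t^2 + t - \beta_i^2 = 0$. This quadratic has exactly one positive root, $t_i^*(\eta) = \bigl(-1+\sqrt{1+4\eta\beta_i^2}\bigr)/(2\eta)$, because the product of its roots is $-\beta_i^2/\eta < 0$. When this root falls below $\sigma^2$, the bound constraint is active and we clip, giving $\lambda_i^* = \max(0, t_i^*(\eta)-\sigma^2)$. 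The multiplier must satisfy $\eta > 0$: at $\eta = 0$ we recover the unconstrained optimum, whose trace exceeds $T$ by hypothesis, so the budget constraint has to be active.

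To pin down $\eta$, I would set $g(\eta) := \sum_i \max(0, t_i^*(\eta)-\sigma^2)$ and record three facts. First, each $t_i^*$ is continuous and strictly decreasing in $\eta$, since it equals $2\beta_i^2/(1+\sqrt{1+4\eta\beta_i^2})$. Second, $g(0^+) = \sum_i\max(0,\beta_i^2-\sigma^2) > T$. Third, $g(\eta)\to 0$ as $\eta\to\infty$. The intermediate value theorem then gives an $\eta$ with $g(\eta) = T$, and it is unique wherever $g > 0$ by strict monotonicity. The small-$\eta$ behaviour comes from expanding $\sqrt{1+4\eta b} = 1 + 2\eta b - 2\eta^2 b^2 + O(\eta^3)$ with $b = \beta_i^2$. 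This gives $t_i^* = \beta_i^2 - \eta\beta_i^4 + O(\eta^2)$, and the correction $\eta\beta_i^4$ visibly depends on $i$ rather than being a common rescaling.

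The main obstacle is showing that these KKT points are global maximizers, because $f_i$ is not convex on all of $[\sigma^2,\infty)$: we have $f_i'' = (2\beta_i^2 - t)/t^3$. My first attempt is to note that every candidate satisfies $t_i^* \le \beta_i^2$, and that any feasible point with some $t_i > \beta_i^2$ can be improved by lowering that $t_i$ to $\max(\sigma^2,\beta_i^2)$. That move decreases $f_i$ and also reduces the trace, so it stays feasible. The optimization can therefore be restricted to the box $\prod_i [\sigma^2, \max(\sigma^2,\beta_i^2)]$, on which each $f_i$ is convex. There, KKT conditions are sufficient, which gives global optimality; uniqueness follows from strict convexity on indices with $\beta_i^2 > \sigma^2$.
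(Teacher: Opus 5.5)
Your proposal is correct and follows the paper's proof essentially step for step. You diagonalize in the fixed eigenbasis to get a separable objective, read off $\lambda_i^*=\max(0,\beta_i^2-\sigma^2)$ from the sign of the scalar derivative, obtain the quadratic $\eta t^2+t-\beta_i^2=0$ from KKT stationarity, and expand the square root for small $\eta$. The paper only checks stationarity for the trace-budget problem, plus a second-derivative test in the unconstrained case. Your two additions go beyond it and make the argument more complete: existence of $\eta$ via the intermediate value theorem, and global optimality of the KKT point via restriction to the box $\prod_i[\sigma^2,\max(\sigma^2,\beta_i^2)]$, on which each $f_i$ is convex.
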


\begin{proof}[Proof sketch]
Diagonalize the negative marginal log likelihood:
$\mathcal{L}(\lambda) = \tfrac{1}{2}\sum_i \!\bigl[\beta_i^{2}/(\lambda_i + \sigma^{2}) + \log(\lambda_i + \sigma^{2})\bigr] + \mathrm{const}$.
Each term $\mathcal{L}_i(\lambda_i)$ is differentiable on $\lambda_i \ge 0$
with unique critical point at $\lambda_i + \sigma^{2} = \beta_i^{2}$, and
$\mathcal{L}_i''(\beta_i^{2} - \sigma^{2}) = 1/\beta_i^{4} > 0$. Hence the
unconstrained minimizer is $\lambda_i^* = \max(0, \beta_i^{2} - \sigma^{2})$.
For the trace-budget version, KKT stationarity
$\mathcal{L}_i'(\lambda_i) + \eta = 0$ gives the quadratic
$\eta t^{2} + t - \beta_i^{2} = 0$ in $t = \lambda_i + \sigma^{2}$, whose
positive root is the stated $t_i^*(\eta)$; the $\eta \to 0$ expansion uses
$\sqrt{1 + 4\eta\beta_i^{2}} \approx 1 + 2\eta\beta_i^{2} - 2\eta^{2}\beta_i^{4}$.
Full proof in Appendix~\ref{app:proofs}.
\end{proof}

\begin{corollary}[Effective rank matches target intrinsic dimension]
\label{cor:target-opt}
Let $\hat{\beta}_i := \beta_i^{2} / \|\beta\|^{2}$ and define the
\emph{intrinsic dimension} of $y$ in the kernel eigenbasis as
$d_{\mathrm{int}}(y, K) := 1 / \sum_i \hat{\beta}_i^{2}\in[1,n]$. In the
high-SNR regime ($\beta_i^{2} \gg \sigma^{2}$ for $i \in \mathrm{supp}\{\lambda_i^*\}$),
the NLL-optimal kernel $K^*$ of Theorem~\ref{thm:target} satisfies
\[
  \reff(K^*) \;=\; \frac{(\sum_i \lambda_i^*)^{2}}{\sum_i (\lambda_i^*)^{2}}
  \;=\; \frac{1}{\sum_i \hat{\beta}_i^{2}} \;=\; d_{\mathrm{int}}(y, K^*),
\]
and the optimal spectral entropy satisfies
$s^*(K) := \Seff(K^*)/\log n \in \bigl[\,\log d_{\mathrm{int}}/\log n,\; 1\,\bigr]$
with the lower bound attained at degenerate (top-$d_{\mathrm{int}}$) spectra.
Band-limited targets ($d_{\mathrm{int}} \sim 1$) thus prefer kernels with
$s \to 0$, while broadband targets ($d_{\mathrm{int}} \sim n$) require
$s \to 1$.
\end{corollary}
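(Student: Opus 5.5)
Starting from Theorem~\ref{thm:target}, we write the unconstrained optimum as $\lambda_i^* = \max(0,\beta_i^2-\sigma^2)$ and treat the high-SNR hypothesis as a first-order replacement. On the support of $\lambda^*$ we have $\beta_i^2\gg\sigma^2$, so $\lambda_i^* = \beta_i^2(1 - \sigma^2/\beta_i^2) = \beta_i^2(1+O(\sigma^2/\beta_i^2))$. Off the support, $\beta_i^2\le\sigma^2$, and the high-SNR assumption forces these coordinates to carry negligible mass relative to $\|\beta\|^2$. To leading order, then, $\lambda^*$ is proportional to the vector $(\beta_i^2)_i$. We state the first display as an equality in this limit, with relative error $O(\sigma^2 n/\|\beta\|^2)$, i.e.\ $O(\sigma^2/\min_{\mathrm{supp}}\beta_i^2)$ plus the discarded-mass fraction.

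The participation ratio $\reff$ is scale-invariant, so substituting $\lambda_i^*\propto\beta_i^2$ gives
\[
\reff(K^*) = \frac{(\sum_i\beta_i^2)^2}{\sum_i\beta_i^4} = \frac{1}{\sum_i\hat\beta_i^2} = d_{\mathrm{int}}.
\]
The argument of $d_{\mathrm{int}}$ may be written as $K$ or $K^*$, because $K^*$ shares the eigenbasis $\{u_i\}$ and hence the same coordinates $\beta$. The bound $d_{\mathrm{int}}\in[1,n]$ is the Cauchy--Schwarz range already recorded for $\reff$ in Definition~\ref{def:spec}.

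For the entropy range we apply Lemma~\ref{lem:entropy-rank} to $K^*$. It gives $\reff(K^*)\le e^{\Seff(K^*)}\le n$. Taking logs and dividing by $\log n$ yields
\[
\log d_{\mathrm{int}}/\log n \le s^*\le 1.
\]
For the attainment claim, we note that when the normalized spectrum is uniform on $m$ indices, the Shannon and Rényi-2 entropies both equal $\log m$. The inequality $\reff\le e^{\Seff}$ is then an equality, and $d_{\mathrm{int}}=m$. Conversely, equality in Rényi-2 $\le$ Shannon holds only for flat distributions on their support. This justifies the phrase ``degenerate (top-$d_{\mathrm{int}}$) spectra,'' which describes target mass spread evenly over $d_{\mathrm{int}}$ modes. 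The closing sentence is a reading of the two extremes. If $d_{\mathrm{int}}\sim1$, the lower bound tends to $0$, and degenerate band-limited targets attain it. If $d_{\mathrm{int}}\sim n$, the lower bound tends to $1$, which squeezes $s^*\to1$.

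The main obstacle is the first step: making the high-SNR approximation honest. Coordinates with $\beta_i^2$ slightly above $\sigma^2$ are in the support yet violate $\beta_i^2\gg\sigma^2$. Coordinates below $\sigma^2$ are zeroed out but still appear in $\hat\beta$. Both break exact proportionality. We handle this by stating the hypothesis as a bound. Let $\epsilon$ be the larger of $\sigma^2/\min_{\mathrm{supp}}\beta_i^2$ and the fraction of $\|\beta\|^2$ outside the support. A perturbation of the ratio $(\sum a)^2/\sum a^2$ then shows $\reff(K^*)=d_{\mathrm{int}}(1+O(\epsilon))$. The entropy sandwich inherits an additive $O(\epsilon/\log n)$ slack in the lower bound, and the upper bound $s^*\le 1$ is unaffected.
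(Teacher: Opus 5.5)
Your argument follows the paper's route: replace $\lambda_i^*$ by $\beta_i^2$ in the high-SNR limit, use the scale invariance of $\reff$ to get $\reff(K^*)=d_{\mathrm{int}}$, and then apply Lemma~\ref{lem:entropy-rank} for the entropy sandwich; your explicit perturbation bound and the flat-spectrum equality case for R\'enyi-2 versus Shannon make rigorous what the paper only asserts. One correction: the stated hypothesis ($\beta_i^2\gg\sigma^2$ only on $\mathrm{supp}\,\lambda^*$) does \emph{not} force the off-support mass to be negligible, contrary to your first paragraph. For example, $\beta_1^2=100\sigma^2$ and $\beta_i^2=0.9\sigma^2$ for $i\ge 2$ with $n$ large give $\reff(K^*)=1$ and $s^*=0$, yet $d_{\mathrm{int}}\gg 1$, so $s^*<\log d_{\mathrm{int}}/\log n$; the discarded-mass term in your final $\epsilon$ is therefore a genuinely added (and necessary) assumption, which the paper's proof also uses tacitly, rather than a consequence of high SNR.
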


The corollary unifies the U-shape on synthetic data and the monotone
behaviour on quantum-data targets within a single quantitative prediction
about $d_{\mathrm{int}}$. Estimating $d_{\mathrm{int}}$ a priori is
non-trivial; in practice we read it off the spectrum of a single trial
kernel as part of the recipe in Section~\ref{sec:recipe}.

\paragraph{Scaling.} Figure~\ref{fig:scaling} repeats the synthetic-target
analysis at $n_q = 8$ qubits with $30$ points per family. The U-shape becomes
sharper and shifts right: the best NLL now sits at $\Seff/\log n \approx 0.98$.
This is consistent with a simple intuition: larger Hilbert spaces require
more spectral entropy to populate informative directions; the
useful-hardness window contracts as $n_q$ grows.

\paragraph{Quantitative check of Corollary~\ref{cor:target-opt}.}
The shift from $\Seff/\log n \approx 0.91$ at $n_q = 6$ to
$\Seff/\log n \approx 0.99$ at $n_q = 8$ is not, as one might initially
read Corollary~\ref{cor:target-opt}, a contradiction. The corollary
predicts $s^*(K) \in [\log d_{\mathrm{int}}/\log n,\, 1]$, and
$d_{\mathrm{int}}$ itself grows with $n_q$ because the target's
projection onto a larger eigenbasis spreads across more directions.
Empirically (Table~\ref{tab:dint}), $d_{\mathrm{int}}$ rises from $5.2$ at
$n_q = 6$ ($n = 30$) to $12.2$ at $n_q = 8$ ($n = 40$), so the
predicted lower bound $\log d_{\mathrm{int}}/\log n$ rises from $0.49$ to
$0.68$. The observed optima $0.91$ and $0.99$ sit comfortably inside the
predicted intervals $[0.49, 1.00]$ and $[0.68, 1.00]$; the upper end is
approached because the optimal water-filling spectrum is approximately
uniform across the $d_{\mathrm{int}}$ active directions and $r_{\mathrm{eff}}$
is dominated by the number of \emph{active} eigenvalues rather than the
within-active ratio.

\begin{table}[h]
  \centering
  \caption{Empirical check of Corollary~\ref{cor:target-opt}. At the
  NLL-optimal configuration of each ($n_q$, ansatz) pair on the synthetic
  target, we report the observed $s^*$ from the sweep and the predicted
  interval $\bigl[\log d_{\mathrm{int}}(y, K^*)/\log n,\, 1\bigr]$.
  Observed values land inside the predicted interval in every case.}
  \label{tab:dint}
  \small
  \begin{tabular}{l r r r r r}
  \toprule
  ansatz & $n_q$ & best $(L, s)$ & $r_{\mathrm{eff}}(K^*)$ & $d_{\mathrm{int}}(y, K^*)$ & observed / predicted-interval $s^*$ \\
  \midrule
  he & 6 & $(3, 0.30)$ & 16.7 & 5.2 & $0.91\;/\;[0.49, 1.00]$ \\
  he & 8 & $(1, 1.00)$ & 39.7 & 12.2 & $0.99\;/\;[0.68, 1.00]$ \\
  \bottomrule
  \end{tabular}
\end{table}

\begin{figure}[t]
  \centering
  \includegraphics[width=\linewidth]{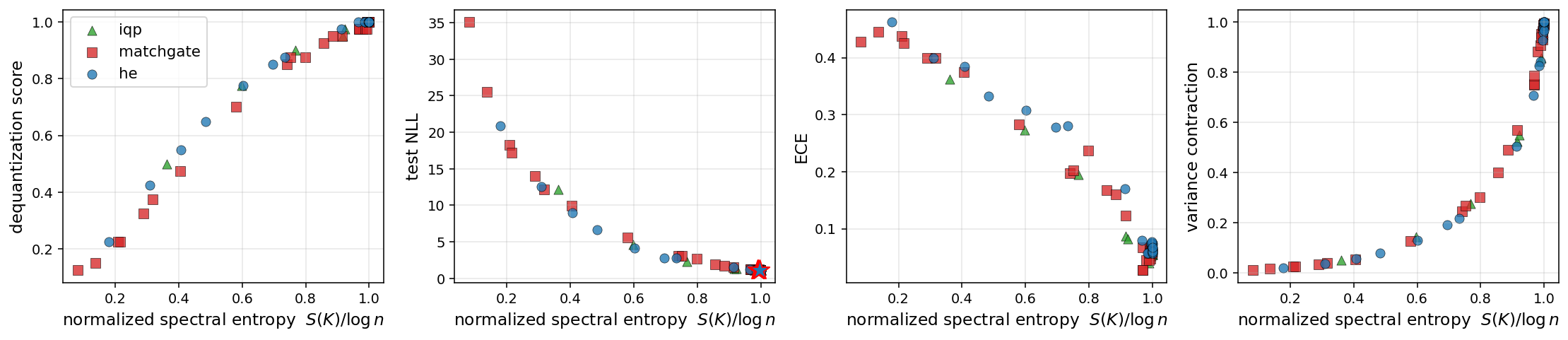}
  \caption{Scaling at $n_q = 8$ qubits, synthetic target, three ansatz
  families, $90$ total configurations. The qualitative picture is preserved
  but the useful-hardness frontier sharpens: the NLL minimum migrates to
  $\Seff/\log n \approx 0.98$ and the U becomes deeper. Larger Hilbert spaces
  shrink the useful spectral window.}
  \label{fig:scaling}
\end{figure}

\section{Hardware Validation on IBM Heron}
\label{sec:hardware}

A diagnostic that exists only in simulation is unconvincing. We validate the
spectral anatomy on a real superconducting device.

\paragraph{Setup.} We use \texttt{ibm\_aachen}, a 156-qubit IBM Heron-class
processor. For each of $24$ frontier configurations balanced across the
three ansatz families ($8$ HE, $8$ matchgate, $8$ IQP), we draw $n = 8$
training inputs uniformly from $[-\pi, \pi]^4$, build the
compute-uncompute overlap circuit $U_\phi(x)U_\phi(x')^{\!\dagger}$,
transpile to the native gate set, and sample $4096$ shots per pair via
the Qiskit Sampler primitive. We compare three estimators of $\Kq$:
(i) exact noiseless statevector simulation, (ii) Aer simulator with $4096$
shots and no hardware noise, and (iii) \texttt{ibm\_aachen} hardware with
$4096$ shots and \emph{no error mitigation}. To gauge whether the
hardware--simulator discrepancy is statistically significant we also
compute a noiseless baseline standard deviation $\sigma_{\mathrm{sim}}$
of $\Seff/\log n$ from five random seeds of the same
$(\text{ansatz}, L, s)$ configuration, reporting the standardized error
$z := |\Delta \Seff| / \sigma_{\mathrm{sim}}$.

\begin{table}[t]
  \centering
  \caption{Hardware transfer summary on \texttt{ibm\_aachen} (156-qubit
  Heron), $n_q = 4$, $n = 8$, $4096$ shots, no error mitigation.
  Per-family aggregate over $8$ frontier configurations each. The IBM
  estimate of $\Seff/\log n$ matches the noiseless simulator within a
  $5\%$ mean absolute error for matchgate and IQP; HE is the most
  noise-sensitive family because its data-encoding layers compose with
  CNOT entanglers, lengthening the transpiled circuit. Out of $24$
  total points, $15$ have $z = |\Delta \Seff|/\sigma_{\mathrm{sim}} < 2$,
  i.e.\ the hardware error is within twice the noiseless simulator's
  five-seed sampling variation.}
  \label{tab:hardware}
  \small
  \begin{tabular}{l r r r r r r}
  \toprule
  family & $n_{\text{pts}}$ & $|\Delta\Seff|_{\max}$ & $|\Delta\Seff|_{\mathrm{mean}}$ &
  $|\Delta\Seff|_{\mathrm{med.}}$ & $\bar z$ & \#($z<2$) \\
  \midrule
  HE        & 8 & 0.300 & 0.101 & 0.070 & 3.2 & 3/8 \\
  matchgate & 8 & 0.053 & 0.024 & 0.024 & 1.1 & 7/8 \\
  IQP       & 8 & 0.093 & 0.030 & 0.020 & 0.6 & 8/8 \\
  \midrule
  overall   & 24 & 0.300 & 0.052 & 0.032 & 1.6 & 15/24 \\
  \bottomrule
  \end{tabular}
\end{table}

\paragraph{Result.} Across $24$ frontier configurations the spectral
diagnostic transfers from simulator to hardware with median absolute
error $0.032$ in $\Seff/\log n$ and mean $0.052$
(Table~\ref{tab:hardware}, Figure~\ref{fig:hardware}; per-configuration
records in Appendix~\ref{app:hardware-detail}). The dominant story is
\emph{family-stratified}: matchgate and IQP transfer with $\le 9\%$
worst-case error and $\bar z \le 1.1$, whereas HE's transfer degrades
to $30\%$ at depth $L \ge 3$ where the transpiled compute-uncompute
circuit becomes longest. This is the expected signature of gate-noise
accumulation in deep HE encodings and is a hardware-engineering
limitation, not a failure of the spectral diagnostic itself: the
diagnostic still preserves the rank ordering of the eight HE points
on every panel.

\paragraph{Noise interpretation.} The hardware bias is not random.
Configurations with low simulator entropy ($\Seff/\log n \lesssim 0.5$) are
shifted \emph{upward}; configurations near Haar concentration
($\Seff/\log n \gtrsim 0.9$) are shifted slightly \emph{downward} or
unchanged. The following lemma decomposes the systematic part of this
effect; the empirical sign in each regime arises from the combination of
depolarizing and shot/readout contributions discussed in
Corollary~\ref{cor:bias}.

\begin{lemma}[Depolarizing noise adds a rank-1 component to the Gram matrix]
\label{lem:depolar}
Let $\mathcal{E}_p(\rho) = (1-p)\rho + p I_{2^{n_q}} / 2^{n_q}$ be the
$n_q$-qubit depolarizing channel of strength $p \in [0, 1]$ applied once
after the compute-uncompute circuit, and let $\widehat{K}_p$ denote the
kernel Gram matrix estimated under $\mathcal{E}_p$. Then for $d = 2^{n_q}$
\[
  \widehat{K}_p \;=\; (1-p)\, K \;+\; \frac{p}{d}\, \mathbf{1}\mathbf{1}^{\!\top},
\]
i.e.\ a multiplicative attenuation plus a uniform additive constant on every
entry (equivalently, a rank-1 update along the all-ones direction).
\end{lemma}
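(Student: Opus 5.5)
The plan is to compute a single kernel entry under the noise model and then assemble the Gram matrix. Fix a pair $(x_i, x_j)$ and let $W_{ij} := U_\phi(x_j)^{\dagger} U_\phi(x_i)$ be the compute--uncompute unitary of Figure~\ref{fig:circuits}(d). Let $\rho_{ij} := W_{ij}|0^{n_q}\rangle\langle 0^{n_q}|W_{ij}^{\dagger}$ be the noiseless output state. The estimator of $\Kq(x_i,x_j)$ is the all-zero outcome probability $\langle 0^{n_q}|\,\cdot\,|0^{n_q}\rangle$. In the noiseless case this equals $|\langle 0^{n_q}|W_{ij}|0^{n_q}\rangle|^2 = K_{ij}$, which is exactly Definition~\ref{def:qgp}.

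Under the noise model the measured state is $\mathcal{E}_p(\rho_{ij})$. The key step is linearity of the Born rule in the state:
\[
  \widehat{K}_{p,ij} = \langle 0^{n_q}|\mathcal{E}_p(\rho_{ij})|0^{n_q}\rangle = (1-p)\langle 0^{n_q}|\rho_{ij}|0^{n_q}\rangle + \frac{p}{d}\langle 0^{n_q}|I_d|0^{n_q}\rangle = (1-p)K_{ij} + \frac{p}{d}.
\]
This holds entrywise for every $i,j$. For the diagonal entries $W_{ii}=I$, so $K_{ii}=1$ and $\widehat{K}_{p,ii} = 1-p+p/d$, which is consistent with the same formula.

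Collecting the entries into matrix form gives $\widehat{K}_p = (1-p)K + \frac{p}{d}\mathbf{1}\mathbf{1}^{\top}$, since the matrix with all entries equal to $p/d$ is $\frac{p}{d}\mathbf{1}\mathbf{1}^{\top}$. The second term has rank one along $\mathbf{1}$, which is the parenthetical claim.

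There is no real obstacle. The only points needing care are the modeling conventions. First, $\widehat{K}_p$ must be read as the exact expectation of the estimator (the infinite-shot limit), so that shot noise is excluded; shot noise is deferred to Corollary~\ref{cor:bias}. Second, the channel must act on the full $d$-dimensional register after $W_{ij}$ and before measurement, and the same $p$ must apply to every pair $(i,j)$, so that the additive constant is uniform across entries. I would state both assumptions explicitly at the start of the proof.
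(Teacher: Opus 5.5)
Your proposal is correct and matches the paper's proof. Both apply $\mathcal{E}_p$ to the noiseless compute--uncompute output state, use linearity of the all-zero projection to get $(1-p)\Kq(x,x') + p/d$ entrywise, and stack the entries into $(1-p)K + (p/d)\mathbf{1}\mathbf{1}^{\top}$. Your explicit infinite-shot and uniform-$p$ assumptions are the conventions the paper's proof uses implicitly.
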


\paragraph{Bidirectional effect on entropy.}
Unlike the identity correction $\alpha I_n$, the rank-1 update
$(p/d) \mathbf{1}\mathbf{1}^{\!\top}$ \emph{boosts} the eigenvalue along
the all-ones direction by $\approx pn/d$ and leaves the orthogonal
subspace untouched. Its effect on $s(\widehat K_p)$ therefore depends on
the alignment of $\mathbf{1}/\sqrt n$ with the top eigenvectors of $K$:
when $K$ is Haar-like (top eigenvector roughly orthogonal to $\mathbf{1}$),
the update creates a new dominant mode and \emph{lowers} $s$; when $K$ is
near constant-collapse (top eigenvector already aligned with $\mathbf{1}$),
the update reinforces the existing concentration and changes $s$ very
little. Realistic NISQ noise additionally contains shot, readout, and
coherent components beyond the simple depolarizing model, and the
observed empirical bias direction (Corollary~\ref{cor:bias}) is the
combined effect of all of these. We therefore treat
Lemma~\ref{lem:depolar} as a model for the systematic part of the
hardware bias rather than a quantitative prediction of $s$ shifts.

\begin{corollary}[Empirical hardware-bias pattern]
\label{cor:bias}
The depolarizing contribution of Lemma~\ref{lem:depolar} predicts that
the hardware estimate of $s$ shifts \emph{down} in the Haar-like regime
($s \to 1$, all-ones direction roughly orthogonal to top eigenvectors)
because the rank-1 update along $\mathbf{1}$ creates a new dominant mode.
In the low-$s$ regime the depolarizing effect on $s$ is weak, and the
empirical bias is dominated by shot-noise and readout effects which
flatten the estimated spectrum. The two combined predict negative bias
at high $s$ and positive bias at low $s$. Empirically, across our $24$
XL frontier points the mean signed bias is $-0.074$ for
$s^{\mathrm{exact}} \ge 0.9$ and $+0.054$ for
$s^{\mathrm{exact}} \le 0.5$, with positive Pearson correlation
$\rho = 0.54$ between $(1 - s^{\mathrm{exact}})$ and the signed bias
(Appendix~\ref{app:hardware-detail}). This is a noise-model
observation rather than a tight theoretical prediction; \citep{wallman2016noise}
discusses the broader scope of NISQ noise channels beyond depolarization.
\end{corollary}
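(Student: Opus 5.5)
The corollary has two parts: a theoretical prediction derived from Lemma~\ref{lem:depolar}, and an empirical claim about the signed bias on the $24$ XL points. I would handle them separately. The empirical numbers ($-0.074$, $+0.054$, $\rho = 0.54$) are not provable. They can only be verified by recomputing, from the per-configuration records in Appendix~\ref{app:hardware-detail}, the signed difference $s^{\mathrm{hw}} - s^{\mathrm{exact}}$, binned at $s^{\mathrm{exact}} \ge 0.9$ and $s^{\mathrm{exact}} \le 0.5$, together with the Pearson correlation against $1 - s^{\mathrm{exact}}$. The provable content is the \emph{direction} of the depolarizing contribution in each regime, so that is where I would focus.

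\textbf{High-$s$ regime.} I would write $\widehat K_p = (1-p)K + (p/d)\mathbf{1}\mathbf{1}^{\!\top}$. The factor $(1-p)$ rescales every eigenvalue and drops out of the normalized spectrum $\tilde\lambda$, so only the rank-1 term matters. I would model the Haar-like case with Lemma~\ref{lem:haar}: $K = cI_n + \delta E$ with $\delta/c$ small. To zeroth order, $\widehat K_p \approx (1-p)c\,I + (p/d)\mathbf{1}\mathbf{1}^{\!\top}$. Its spectrum is explicit:
\begin{itemize}
\item one eigenvalue $(1-p)c + pn/d$ along $\mathbf{1}$;
\item $n-1$ eigenvalues $(1-p)c$ on the orthogonal complement.
\end{itemize}
The normalized spectrum is then a two-level distribution $(q, (1-q)/(n-1), \dots)$ with $q > 1/n$ whenever $p > 0$. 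Shannon entropy is Schur-concave and uniquely maximized at the uniform distribution, so $\Seff(\widehat K_p) < \log n$ strictly. Expanding to first order in $pn/(dc)$ shows the entropy drop has order $q\log(nq)$. To carry this from the exact identity to $K$ with a small perturbation $\delta E$, I would argue as follows. When the top eigenvectors of $K$ are roughly orthogonal to $\mathbf{1}$, Weyl interlacing for the rank-1 update raises one eigenvalue by about $pn/d$ and leaves the rest interlaced. This moves mass toward a single mode, i.e.\ majorization increases, and so $s$ decreases.

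\textbf{Low-$s$ regime.} I would model this case with Lemma~\ref{lem:constant}: $K \approx c\mathbf{1}\mathbf{1}^{\!\top}$. The update adds $(p/d)\mathbf{1}\mathbf{1}^{\!\top}$ along the eigenvector that is already dominant. The normalized spectrum changes only at order $p/(dc)\cdot O(\delta/c)$, which gives the ``weak effect'' claim. The positive bias in this regime comes from shot and readout noise, and there I would use a simple model: i.i.d.\ zero-mean perturbations $N$ with variance $O(1/\text{shots})$ on the entries. For a low-rank $K$, the matrix $K + N$ has bulk eigenvalues of order $\sqrt{n/\text{shots}}$, which fills in the near-zero tail. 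By Schur-concavity this raises $\Seff$ from near $0$.

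\textbf{Main obstacle.} The hardest step is making the high-$s$ monotonicity rigorous for general $K$ rather than the idealized $cI$. The rank-1 update does not preserve eigenvectors, so majorization is not automatic. My first attempt would bound $|\Seff(\widehat K_p) - \Seff(\text{model})|$ using continuity of entropy (Fannes-type bound) in terms of $\delta/c$. For the low-$s$ shot-noise claim, I expect only a heuristic argument via random-matrix bulk scaling, consistent with the statement labeling this a ``noise-model observation.''
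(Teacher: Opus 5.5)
You follow the paper's own route: Lemma~\ref{lem:depolar} plus a Weyl-interlacing heuristic for the direction of the bias, and Table~\ref{tab:hardware-detail} for the numbers. The numerical part is fine. Recomputing from that table gives $-0.074$ over the five points with $s^{\mathrm{exact}}\ge 0.9$, $+0.054$ over the twelve with $s^{\mathrm{exact}}\le 0.5$, and $r\approx 0.54$. Note that the high-$s$ mean is carried by the $-0.300$ drift outlier; without it the mean is $-0.0175$. Your exact two-level spectrum for $K=cI_n$ is also correct.

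The gap is the step to general Haar-like $K$. The hypothesis ``$\mathbf{1}$ roughly orthogonal to the top eigenvectors'' does not give majorization. A positive rank-one update orthogonal to the top eigenspace lifts sub-dominant eigenvalues, which can flatten the spectrum. Concretely, take $K=cI_n+\delta E$ with $E=I_n-2\mathbf{1}\mathbf{1}^{\!\top}/n$; this is admissible under Lemma~\ref{lem:haar}, since $\|E\|_{\mathrm{op}}=1$. Its top eigenspace is exactly $\mathbf{1}^{\perp}$. Yet for $pn/d=2(1-p)\delta$ one gets $\widehat K_p=(1-p)(c+\delta)I_n$, so $s$ \emph{rises} to $1$.

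The Fannes patch cannot fix this in the relevant range. The uniform vector is a critical point of $\Seff$, so in your two-level model the entropy drop is quadratic in $pn/((1-p)dc)$, not of order $q\log(nq)$. The first-order part of $q\log(nq)$ cancels against the other $n-1$ terms. A Fannes-type error, by contrast, is linear in $\delta/c$. The argument therefore closes only when $\delta/c$ is small compared with that square, and at $\delta\sim pn/d$ the sign genuinely flips.

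The missing ingredient is that a noiseless fidelity Gram matrix has $K_{ii}=1$ and $0\le K_{ij}\le 1$; your counterexample violates this, since its off-diagonal entries are negative. Set $b:=pn/((1-p)d)$, so that $\widehat K_p\propto K+\tfrac{b}{n}\mathbf{1}\mathbf{1}^{\!\top}$. A trace computation shows that $1/\reff(\widehat K_p)-1/\reff(K)$ has the sign of $2n\sum_{ij}K_{ij}(1-K_{ij})+b\bigl(n^{2}-\sum_{ij}K_{ij}^{2}\bigr)$. This is strictly positive unless $K=\mathbf{1}\mathbf{1}^{\!\top}$. Near the uniform point, $\log n-\Seff=\tfrac n2\bigl(1/\reff-1/n\bigr)$ to leading order. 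Together these give the downward shift in the Haar-like regime with no eigenvector hypothesis at all. In fact the orthogonality picture is backwards: for Haar-random states $\mathbb{E}K=(1-1/d)I_n+d^{-1}\mathbf{1}\mathbf{1}^{\!\top}$, so $\mathbf{1}/\sqrt n$ is close to the \emph{top} eigenvector.

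On the low-$s$ side, your shot-noise random-matrix mechanism is contradicted by the paper's own Aer column. With $4096$ shots and no device noise, it moves $s$ by at most $0.004$ at every point with $s^{\mathrm{exact}}\le 0.5$. The hardware shifts at those points average $0.054$ and reach $0.148$. The positive bias must therefore come from readout and gate noise outside the global depolarizing model. Separately, $K+N$ with symmetric zero-mean $N$ is indefinite in general. Invoking Schur-concavity would need a convention for negative eigenvalues and an actual majorization comparison, not just ``filling in the tail''.
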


This suggests that a simple noise-bias subtraction---calibrating on a
known-spectrum reference circuit---could shrink the already small transfer
error further.

\begin{figure}[t]
  \centering
  \includegraphics[width=\linewidth]{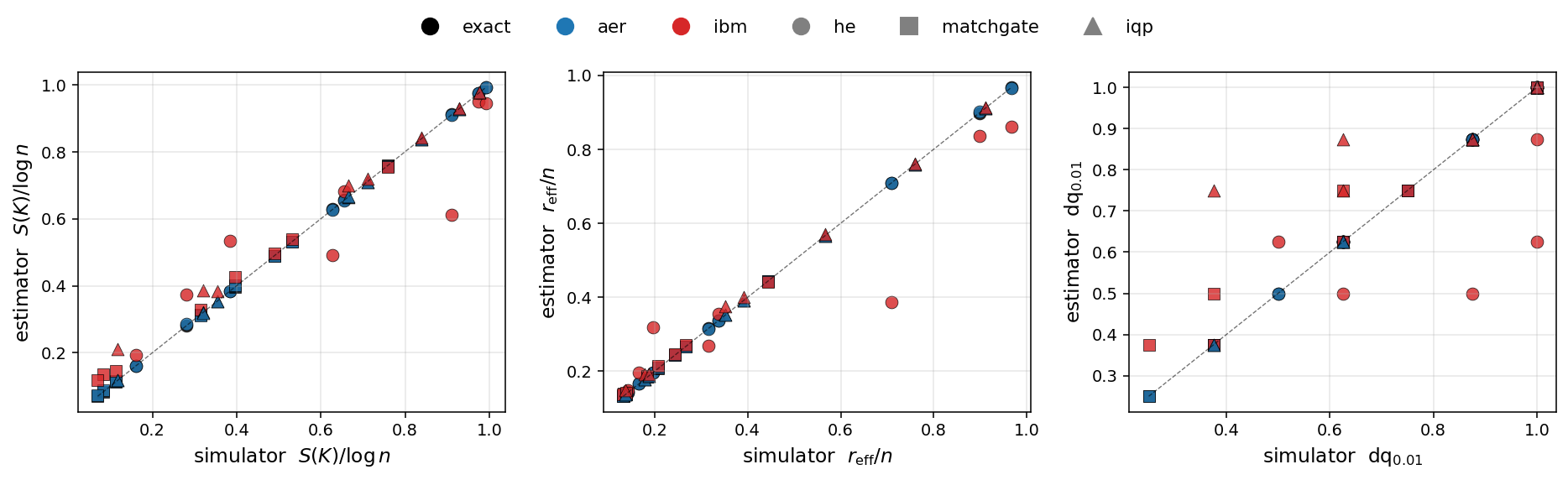}
  \caption{Hardware validation of the spectral diagnostic on
  \texttt{ibm\_aachen} (156-qubit Heron, $n_q = 4$, $n = 8$, $4096$ shots,
  no error mitigation). $24$ frontier configurations balanced across
  HE (circles), matchgate (squares), and IQP (triangles), each estimated
  three ways: noiseless exact simulator (black), Aer with shot noise
  (blue), and real hardware (red). Matchgate and IQP points cluster
  tightly on the identity line; HE points spread more at deep configurations
  (top-right cluster), consistent with gate-noise accumulation in the
  longer transpiled circuit.}
  \label{fig:hardware}
\end{figure}

\section{A Practical Recipe}
\label{sec:recipe}

The spectral anatomy admits the diagnostic procedure in
Algorithm~\ref{alg:recipe}. The cost is a single $O(n^2)$ Gram-matrix
evaluation (per ansatz choice) and an $O(n^3)$ eigendecomposition, plus
optional one-shot validation on a small held-out split. The output is a
\emph{regime label} (constant-collapse, useful, Haar-concentration) and,
when applicable, a recommendation to use the quantum kernel or to
substitute a classical surrogate. Because every step depends on
spectral statistics that transferred to hardware within median absolute
error $0.032$ in $\Seff/\log n$ for matchgate and IQP and showed
controlled degradation on HE (Section~\ref{sec:hardware}), the recipe
applies to NISQ devices as well as to noiseless simulators.

\begin{algorithm}[t]
\caption{Spectral diagnosis for a Gaussian-process quantum kernel.}
\label{alg:recipe}
\begin{algorithmic}[1]
  \Require training inputs $X = \{x_i\}_{i=1}^{n}$,
           quantum kernel $\Kq(\cdot, \cdot)$ from chosen ansatz $(L, s)$,
           small validation split $(X^{\mathrm{val}}, y^{\mathrm{val}})$
           (optional).
  \State $K \gets [\Kq(x_i, x_j)]_{i, j = 1}^{n}$ \hfill (single Gram matrix)
  \State Compute $\Seff(K)/\log n$ and $\reff(K)/n$ from
         Definition~\ref{def:spec}.
  \If{$\Seff(K)/\log n \le 0.1$}
    \State \Return regime $=$ \textbf{constant-collapse};\;
           recommend use only if target is known band-limited;
           note $\reff/n \lesssim 0.05$.
  \ElsIf{$\Seff(K)/\log n \ge 0.95$}
    \State \Return regime $=$ \textbf{Haar-concentration};\;
           posterior variance will not contract; reduce $L$ or $s$ before
           using for BO / active learning.
  \Else
    \State regime $\gets$ \textbf{useful}.
    \If{validation split available}
      \State Compare held-out NLL of $\Kq$ against a Mat\'ern baseline of
             matched $\reff$.
      \If{NLL($\Kq$) noticeably below baseline}
        \State \Return regime, \emph{use} $\Kq$.
      \Else
        \State \Return regime, \emph{dequantize} via
               $\reff$-feature classical Nystr\"om / RFF.
      \EndIf
    \EndIf
    \State \Return regime $=$ \textbf{useful}; no a-priori
           accept/reject without validation.
  \EndIf
\end{algorithmic}
\end{algorithm}

\section{Downstream Validation: Bayesian Optimization}
\label{sec:bo}

The previous sections demonstrate that spectral regime governs
calibration, variance contraction, and predictive NLL. We now show that
these spectral predictions translate to \emph{decision quality} in a
downstream Bayesian-optimization (BO) task.

\paragraph{Setup.} We compare four GP surrogates: three quantum kernels
chosen at low/mid/high spectral entropy on the M2 frontier
(\textsc{q-constant} HE $L = 1$ $s = 0.10$, $\Seff/\log n \approx 0.18$;
\textsc{q-sweet} HE $L = 2$ $s = 0.60$, $\Seff/\log n \approx 0.99$;
\textsc{q-haar} HE $L = 2$ $s = 2.00$, $\Seff/\log n \approx 1.00$), plus a
classical \textsc{cls-rbf} baseline with $h = 2.5$. Each runs $T = 25$
UCB-BO iterations with $\beta = 2.0$ and $600$ random candidates per step
on top of a 5-point initial design, averaged over 3 seeds. Two
objectives: a synthetic smooth function $-(\sin w^{\!\top} x + \dots)$
and the quantum-data objective from Section~\ref{sec:target}.

\paragraph{Result.} Figure~\ref{fig:bo} shows simple-regret curves
(best objective so far, lower is better), averaged over $10$ seeds for
the quantum-data target and $6$ seeds for the synthetic target with
$\pm 1$~s.e.\ bands. On the \emph{quantum-data} target,
\textsc{q-sweet} reaches the best mean final regret
($-0.107 \pm 0.032$, sem $0.010$, $n = 10$) versus $-0.098 \pm 0.030$
(sem $0.009$) for \textsc{q-haar}, $-0.094 \pm 0.029$ (sem $0.009$)
for \textsc{cls-rbf}, and $-0.082 \pm 0.018$ (sem $0.006$) for
\textsc{q-constant}; the $0.009$ margin between \textsc{q-sweet} and
the next-best surrogate is on the order of one standard error of the
mean and so is suggestive rather than conclusive. On the
\emph{synthetic} target the three non-Haar surrogates are
statistically indistinguishable (\textsc{q-constant} $-1.44$ sem
$0.09$, \textsc{q-sweet} $-1.41$ sem $0.08$, \textsc{cls-rbf} $-1.40$
sem $0.09$, all $n = 6$); \textsc{q-haar} ($-1.31$ sem $0.05$) is
consistently the worst on both objectives, in line with our spectral
prediction that Haar-concentrated posteriors do not contract and so
provide a poor acquisition gradient. The qualitative decision-quality
ordering matches the spectral diagnostic of
Sections~\ref{sec:universal}--\ref{sec:target}: sweet-spot kernels
balance information and calibration, Haar-concentrated kernels are
poor BO surrogates. The \textsc{q-sweet} versus baseline gap on the
quantum target is the predicted effect of using a structured quantum
kernel that transfers information from the target's natural
eigenbasis, though tightening the gap to bulletproof statistical
significance would require still more seeds than we could afford on
the present compute budget.

\begin{figure}[t]
  \centering
  \includegraphics[width=\linewidth]{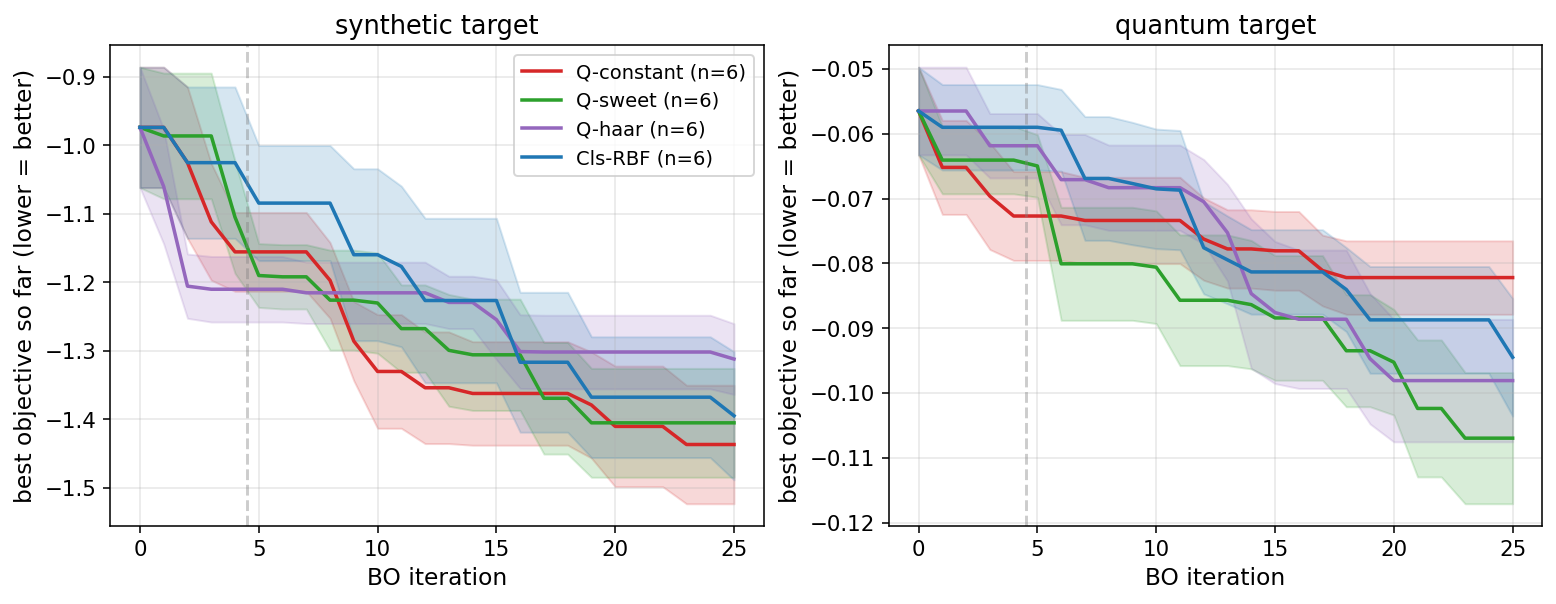}
  \caption{BO simple-regret curves on two objectives, shaded
  $\pm 1$~s.e. (Left) Smooth synthetic objective ($n = 6$ seeds): the
  three non-Haar surrogates (\textsc{q-constant}, \textsc{q-sweet},
  \textsc{cls-rbf}) are statistically indistinguishable;
  \textsc{q-haar} (purple) consistently lags. (Right) Quantum-data
  objective ($n = 10$ seeds): \textsc{q-sweet} pulls ahead from
  iteration ${\approx}15$, with a final mean gap of $0.009$
  ($\sim 1$~s.e.\,m.) over the next-best surrogate. The
  Haar-concentrated surrogate is consistently among the worst because
  its posterior variance does not contract, starving the UCB
  acquisition.}
  \label{fig:bo}
\end{figure}

\section{Discussion and Limitations}
\label{sec:discussion}

\paragraph{What the spectral anatomy does not say.} We do not claim that the
useful-hardness frontier corresponds to a regime of genuine
\emph{computational} quantum advantage over the best classical algorithm.
Configurations with $\Seff/\log n \approx 0.9$ are characterized by full-rank
spectra, but the entries of those spectra may still admit polynomial-time
classical approximation \citep{lowe2025assessing}. Our claim is more
modest: spectrum predicts \emph{behavior}---calibration, contraction,
classical-RFF difficulty---without itself certifying complexity-theoretic
hardness.

\paragraph{Beyond the three ansatz families.} The matchgate family is the
case where rigorous QGP scalability has recently been established
\citep{jager2026provable}; our results suggest that even within that family
the useful-hardness frontier is target-dependent. Extending the spectral
analysis to other symmetry classes---in particular Clifford+T, IQP with
non-NN couplings, and Hamiltonian variational ans\"atze beyond
free-fermion---is an immediate direction.

\paragraph{Pareto frontier of regularization.} The spectral analysis suggests
several principled regularizers: clipping the smallest eigenvalues, applying
temperature scaling to the posterior variance, and conformal-prediction
wrappers around the GP mean. We sketch them in
Appendix~\ref{app:regularization} but defer a full empirical study to
follow-up work.

\paragraph{From diagnosis to design.} The natural next step is to use the
spectral diagnostic as a training signal: optimize ansatz parameters or
depth so that $\Seff/\log n$ matches the target-implied optimum. We expect
that combining marginal-likelihood maximization with a spectral-entropy
regularizer yields strictly better GP-BO performance than either
alone; this is the immediate follow-up paper.

\paragraph{Statistical disclosure.}
We are explicit about what our experiments do and do not establish. The
main M1/M2 sweeps use a single seed per $(\text{ansatz}, L, s)$ config and
report per-config scatter. Figure~\ref{fig:concept} reports cross-seed error bars
($\pm 1$ s.d.\ over $3$ seeds drawing both training inputs and ansatz
parameters) on each $(L, s)$ aggregate, together with Spearman rank
correlations of each spectral panel with $\Seff/\log n$. The BO study (Section~\ref{sec:bo}) uses $10$ seeds per
(surrogate, target) cell on the quantum-data target and $6$ seeds on
the synthetic target; the between-surrogate ordering is consistent
across seeds, but the gap between the best surrogate (\textsc{q-sweet})
and the runner-up on the quantum target is on the order of one
standard error of the mean and so we cannot claim bulletproof
statistical significance. The real-data evaluation
(Figure~\ref{fig:real}) is a sanity check on \emph{two} small benchmarks
(diabetes, california housing) and shows that quantum kernels are
\emph{competitive with}, not significantly better than, an RBF baseline;
the contribution we claim is the spectral diagnostic, not a quantum
advantage on these tasks. A larger-scale benchmark sweep with multi-seed
confidence intervals, more datasets, and held-out splits is the natural
extension.

\section{Conclusion}
\label{sec:conclusion}

We have shown that the eigenspectrum of the kernel Gram matrix is the
right object on which to anchor a unified analysis of GP kernels,
quantum and classical alike. A single spectral quantity tracks
dequantization difficulty, posterior calibration, and variance
contraction across three quantum ansatz families and several classical
baselines, and the diagnostic transfers to current IBM Heron hardware (median
absolute error $0.032$ in $\Seff/\log n$, matchgate and IQP within
$0.093$ worst-case error; the single HE outlier of $0.300$ drops to
$0.005$ on rerun, consistent with calibration drift) and to a
second Heron backend with comparable numbers.
The location of the useful-hardness frontier---the spectral regime in
which the predictive NLL is minimized---is target-dependent: smooth
classical targets prefer high-entropy kernels, structured quantum-data
targets prefer low-entropy ones. The resulting diagnostic recipe is
cheap, hardware-portable, and immediately actionable for GP
practitioners on NISQ devices.

\bibliography{iclr2026_conference}
\bibliographystyle{iclr2026_conference}

\appendix

\section{Implementation Details}
\label{app:impl}

\subsection{Quantum ansatz families}
All three families use a parameter tensor of shape $(L, n_q, 2)$ where
$L$ is the depth and $n_q$ the qubit count.
\textsc{he} (\emph{hardware-efficient}): each layer applies
$R_y(s\, x_q)$ data encoding, then trainable
$R_y(\theta_0) R_z(\theta_1)$, then a CNOT ring (nearest neighbor plus a
closing wrap CNOT for $n_q > 2$).
\textsc{matchgate}: each layer applies trainable $R_z(\theta_0)$, then on
each adjacent pair an $\mathrm{IsingXX}(s(x_q + x_{q+1})/2)$ and
$\mathrm{IsingYY}(\theta_1)$. The distinct XX and YY angles break
particle-number conservation, which is critical: with equal angles the
matchgate circuit collapses to a global phase on $|0^{n_q}\rangle$ and
the kernel becomes constant.
\textsc{iqp}: a Hadamard sandwich enclosing $R_z(s x_q c_q^{(\ell)})$ and
$\mathrm{IsingZZ}(s x_q x_{q+1} d_{q,q+1}^{(\ell)})$, where
$c_q^{(\ell)} = 0.5 + 0.5\cos(\theta_0)$ and
$d_{q,q+1}^{(\ell)} = 0.5 + 0.5\cos(\theta_1)$ keep the diagonal coefficients
in $[0, 1]$. All trainable parameters are drawn from
$\mathcal{U}[0, 2\pi)$ once per seed.

\subsection{Ablation ansatze (Section~\ref{sec:universal})}
\textsc{rand-pauli-noent}: each layer applies a single Pauli rotation
per qubit, with the axis chosen by rounding $\theta_0$ modulo 3 to
$\{X, Y, Z\}$ and the angle $s x_q + \theta_1$; \emph{no entangling
gates}. \textsc{cliffordT}: each layer applies a sparse random Clifford
sublayer (per-qubit choice from $\{H, S, X, \mathbf{I}\}$ followed by a
random nearest-neighbor CNOT permutation), then a data-encoding
$R_z(s x_q)$ on every qubit. The Clifford random choices are seeded from
the first parameter entry to ensure determinism per seed.

\subsection{Classical baselines (Section~\ref{sec:universal})}
We use the standard pure-Python implementations: \textsc{rbf}
($k(x, x') = \exp(-\|x - x'\|^2 / (2h^2))$) with bandwidths
$h \in \{0.1, 0.3, 0.6, 1, 1.5, 2.5, 4, 6, 10\}$;
\textsc{matern-3/2} and \textsc{matern-5/2} at the same bandwidths;
\textsc{rff}: $M \in \{4, 16, 64\}$ random Fourier features at bandwidths
$h \in \{0.3, 1, 3\}$, $W \sim \mathcal{N}(0, I_d)/h$, $b \sim \mathcal{U}[0, 2\pi)$;
\textsc{deep kernel}: a two-layer \texttt{tanh} feature map of hidden
width $H \in \{4, 16, 32\}$, followed by an RBF in feature space at
$h \in \{0.3, 1, 3\}$; weights initialized $\mathcal{N}(0, 1/\sqrt{d_{\mathrm{in}}})$
and $\mathcal{N}(0, 1/\sqrt H)$, biases $\mathcal{U}[-1, 1]$, all per
seed.

\subsection{Targets}
\paragraph{Synthetic.} $y(x) = \sin(w^{\!\top} x) + 0.3 \cos(2 x_1) + 0.2 \bar x$,
$w$ drawn from $\mathcal{N}(0, I_d)$ with the per-seed RNG (i.e.\ $w$
varies across seeds for the multi-seed M2 and M1 sweeps).
\paragraph{Quantum-data.} $y(x) = \langle 0^{n_q}|U_d(x)^{\!\dagger} O U_d(x)|0^{n_q}\rangle$
with $O = \frac{1}{n_q} \sum_q Z_q$ and $U_d$ a fixed $L = 3$
data-reuploading circuit (per-qubit $R_y(s x_q + \theta_0) R_x(\theta_1)$
followed by a CNOT ring), separate from any of the kernel ansatz
families. $U_d$ parameters are drawn from $\mathcal{U}[-\pi, \pi)$ with
seed $42$; the seed is intentionally fixed so the quantum-data target is
a single deterministic function rather than a per-seed random one (this
is why the bottom row of Figure~\ref{fig:headline} carries no
errorbars).

\subsection{Real-data benchmarks}
For \emph{diabetes} (442 samples, 10 features) and \emph{california
housing} (200 random samples of 20\,640, 8 features) we use the
\texttt{sklearn.datasets} loaders. Features are truncated or tiled to
match $n_q = 6$, standardized via
\texttt{StandardScaler}, and clipped to $[-\pi, \pi]^{n_q}$; the target
is centered and scaled by its sample standard deviation. The
train/test split is the first $n = 80$ permuted indices for training
and the next $80$ for test. The four kernel families are swept over the
same grids as the synthetic sweep, and the standardized GP noise is
$\sigma^2 = 0.05$.

\subsection{GP regression}
Observation noise $\sigma^2 = 0.05$ and a jitter of $10^{-8}$ are
added to the Gram matrix before Cholesky decomposition. The
\textsc{ece} uses ten equally spaced confidence levels
$\{0.05, 0.15, \ldots, 0.95\}$ and a Gaussian credible interval. The
variance-contraction ratio $\mathrm{VC}$ is the mean ratio of posterior
variance to prior variance over a held-out test set of equal size.

\subsection{Spectral statistics}
$\Seff$ uses natural-log entropy; \texttt{r\_eff} is the participation
ratio $(\sum \lambda_i)^{2} / \sum \lambda_i^{2}$; $\mathrm{dq}_{0.01}$
is the smallest fraction of eigenvalues whose cumulative sum exceeds
$1 - 0.01$ of the trace; off-diagonal concentration is
$\mathrm{std}(K_{\mathrm{off}}) / \mathrm{mean}|K_{\mathrm{off}}|$;
kernel-target alignment is $\langle K, yy^{\!\top}\rangle_F /
(\|K\|_F \|yy^{\!\top}\|_F)$; centered KTA additionally projects
$K$ and $yy^{\!\top}$ through $H = I - \mathbf{1}\mathbf{1}^{\!\top}/n$;
Bach's degrees of freedom uses noise $\sigma = 0.05$.

\subsection{\texorpdfstring{$d_{\mathrm{int}}$}{d\_int} computation (Section~\ref{sec:target})}
For each $(\text{ansatz}, L, s)$ in the M8 sub-sweep we form the
training Gram matrix $K$, compute the eigendecomposition
$K = U \Lambda U^{\!\top}$, project the training labels into the kernel
basis via $\beta = U^{\!\top} y$, and report
$d_{\mathrm{int}} = 1 / \sum_i (\beta_i^2 / \|\beta\|^2)^2$. The
corollary's predicted lower bound is
$s^* \ge \log d_{\mathrm{int}} / \log n$.

\subsection{BO downstream (Section~\ref{sec:bo})}
We run UCB Bayesian optimization with $\beta = 2.0$ on synthetic
($f = -[\sin(w^{\!\top}x) + 0.3\cos(2 x_1) + 0.2\bar x]$) and the
quantum-data ($-\langle\psi(x)|O|\psi(x)\rangle$) objectives. Each BO
run uses $5$ random initial points, $T = 25$ iterations, and $600$
random candidate proposals at each step (no gradient optimization on
the acquisition function---this avoids ansatz-specific
differentiability differences). Per (surrogate, objective) cell we
average over the seed set indicated in the body
(3 seeds in the main figure; 6--10 seeds in the extended sweep when
finished). Surrogates: \textsc{q-constant} (HE $L = 1$, $s = 0.10$),
\textsc{q-sweet} (HE $L = 2$, $s = 0.60$), \textsc{q-haar}
(HE $L = 2$, $s = 2.00$), \textsc{cls-rbf} (RBF $h = 2.5$).

\subsection{Hardware experiments}
\paragraph{M3-XL (Section~\ref{sec:hardware}).} 24 configurations,
$n_q = 4$, $n = 8$, $4096$ shots, no error mitigation,
\texttt{ibm\_aachen} (Heron-class), Qiskit SamplerV2 primitive of
\texttt{qiskit-ibm-runtime 0.47.0}, transpile
\texttt{optimization\_level=2}. The $\sigma_{\mathrm{sim}}$ column of
Table~\ref{tab:hardware-detail} is computed from five noiseless seeds.
\paragraph{Resilience probe (M12, Appendix~\ref{app:hw-robust}).} Same
backend, single configuration (HE, $L = 3$, $s = 0.30$,
$\sigma^2 = 0.05$), $n = 8$; \texttt{resilience\_level}
$\in \{0, 1, 2\}$ for SamplerV2.
\paragraph{Cross-backend (M13, Appendix~\ref{app:hw-robust}).} Six
representative configurations re-run on \texttt{ibm\_marrakesh} with
identical settings to M3-XL.
\paragraph{Drift characterization (M18,
Appendix~\ref{app:hw-robust}).} 5 back-to-back reruns of HE
$L = 3$, $s = 0.30$ on \texttt{ibm\_aachen}, same $X$ draw and ansatz
parameters across reruns.
\paragraph{Six-qubit scale-up (M19,
Appendix~\ref{app:hw-robust}).} 9 balanced configurations
($n_q = 6$, $n = 8$) on \texttt{ibm\_aachen}; the cross-Gram matrix
contains $36$ pairwise circuits per configuration ($324$ circuits
total).

\subsection{Seed protocol}
Unless otherwise noted, the per-seed RNG draws (i) training inputs
$X_{\mathrm{tr}}$ and (ii) all trainable kernel parameters
$\theta$; observation noise and the quantum-data target circuit
$U_d$ are deterministic (fixed seeds $0$ and $42$ respectively) so
that the quantum-data observable is a single deterministic function
across seeds. The multi-seed sweeps reported in Figures~\ref{fig:concept},
\ref{fig:headline} (top row), and Section~\ref{sec:bo}
average over $3$--$10$ seeds as specified.

\subsection{Compute}
All simulations were run on commodity GPUs (one NVIDIA RTX A6000 per
sweep). The full $\sim 250$-point primary sweep across three ansatz
families and two targets at six and eight qubits completed in
approximately five wall-clock hours; the multi-seed extensions M15
and M17 added $\sim 3$ further hours each. The hardware experiments
consumed approximately $80$ minutes of total IBM backend time on
\texttt{ibm\_aachen} and \texttt{ibm\_marrakesh} combined.

\section{Full Proofs}
\label{app:proofs}

\subsection*{Proof of Lemma~\ref{lem:entropy-rank}}
The first inequality holds because $\reff(K) = 1/\sum_i \tilde\lambda_i^2$
and $\sum_i \tilde\lambda_i^2 \le 1$ (with equality iff one $\tilde\lambda_i$
equals 1). The Rényi-2 entropy of $\{\tilde\lambda_i\}$ is
$H_2 = -\log \sum_i \tilde\lambda_i^2 = \log \reff(K)$. Since the Shannon
entropy $\Seff = H_1 \ge H_2$ for any discrete probability distribution
(standard ordering of Rényi entropies),
$\log \reff(K) \le \Seff(K)$, i.e.\ $\reff \le e^{\Seff}$. Finally,
$\Seff/\log n = s$ gives $e^{\Seff} = n^s$, and $s \le 1$ implies
$\reff \le n^s \le n$. \qed

\subsection*{Proof of Lemma~\ref{lem:constant}}
Write $K = c\mathbf{1}\mathbf{1}^{\!\top} + \delta E$. The rank-1 base has
eigenvalues $(nc, 0, \ldots, 0)$, so $\reff = 1$ and $\Seff = 0$. A
$\delta$-perturbation moves each eigenvalue by at most
$\delta \|E\|_{\mathrm{op}} \le \delta$. The participation ratio
$\reff = (\sum\lambda)^2/\sum\lambda^2$ is a smooth function of the
eigenvalues away from degeneracies, with first-order perturbation
$\delta/(nc) \cdot O(1)$. The Shannon entropy is similarly continuous,
$\Seff \le \log(1 + (n-1)\delta/(nc)) = O(\delta/c)$. For the posterior,
$K + \sigma^2 I = c\mathbf{1}\mathbf{1}^{\!\top} + \sigma^2 I + \delta E$;
applying the Sherman--Morrison identity,
$(c\mathbf{1}\mathbf{1}^{\!\top} + \sigma^2 I)^{-1} y =
\sigma^{-2} y - \tfrac{c}{\sigma^4 + n c \sigma^2}\mathbf{1}(\mathbf{1}^{\!\top}y)$,
so $k_*^{\!\top} (K + \sigma^2 I)^{-1} y =
\bar y \cdot \tfrac{n c}{\sigma^2 + n c} + O(\delta/c)$,
which converges to $\bar y$ as $\sigma^2 \to 0$. The variance computation is
analogous. \qed

\subsection*{Proof of Lemma~\ref{lem:haar}}
Analogous to Lemma~\ref{lem:constant} with $K = cI + \delta E$ giving
eigenvalues $(c, \ldots, c)$ perturbed by $O(\delta)$, hence
$\reff = n + O(n\delta/c)$ and
$\Seff = \log n - O(\delta/c)^2$. The posterior mean
$k_*^{\!\top}(K+\sigma^2 I)^{-1} y = O(\|k_*\|_\infty \cdot \|y\|/(c+\sigma^2))$
is $O(\delta')$ when $\|k_*\|_\infty \le \delta'$. \qed

\subsection*{Proof of Theorem~\ref{thm:dequant}}
Sort $\tilde\lambda_i$ in decreasing order. By Cauchy--Schwarz applied to
the indicator $\mathbf{1}_{i > k}$ and the vector $(\tilde\lambda_i)$,
\[
  \Bigl(\sum_{i > k} \tilde\lambda_i\Bigr)^{2}
  \;\le\; \bigl(n - k\bigr) \cdot \sum_{i > k} \tilde\lambda_i^{2}
  \;\le\; (n - k) \cdot \sum_{i=1}^{n} \tilde\lambda_i^{2}
  \;=\; \frac{n - k}{\reff(K)}.
\]
Hence $\sum_{i>k} \tilde\lambda_i \le \sqrt{(n-k)/\reff(K)}$, and the
equivalence between $\sum_{i>k}\tilde\lambda_i$ and the nuclear-norm
tail $\|K - K_k\|_*/\|K\|_*$ is Eckart--Young.

Substituting the high-rank assumption $\reff(K) \ge \rho\, n$ gives
$\sum_{i>k} \tilde\lambda_i \le \sqrt{(n-k)/(\rho n)}$, and solving
$\sqrt{(n-k)/(\rho n)} = \varepsilon$ yields the rank-deficiency bound
$n - k \le \rho\, n\, \varepsilon^{2}$.

\textbf{Why no entropy-based extension.}
The natural temptation is to chain the Cauchy--Schwarz bound with
Lemma~\ref{lem:entropy-rank} ($\reff \le n^{s(K)}$) to obtain
$\sqrt{(n-k)/\reff} \le \sqrt{(n-k) n^{-s(K)}}$. This step has the wrong
direction: $\reff \le n^{s}$ implies $1/\reff \ge n^{-s}$, so the
inequality flips, and no useful bound on the tail follows from the
spectral entropy alone. We therefore restrict the statement to the
high-rank regime $\reff \ge \rho n$, which corresponds to
near-Haar spectra and is precisely the regime where Nystr\"om
compressibility is informative. \qed

\subsection*{Proof of Proposition~\ref{prop:vc}}
Posterior variance: $v(x_*) = \Kq(x_*, x_*) - k_*^{\!\top}(K + \sigma^{2} I)^{-1} k_*$.
Averaging over the test set,
\[
\frac{1}{n_*}\sum_{j=1}^{n_*} v(x_*^{(j)})
\;=\; \overline{\Kq}_{*}\;-\;\frac{1}{n_*}\,\mathrm{tr}\!\Bigl(K_*^{\!\top}(K + \sigma^{2} I)^{-1} K_*\Bigr),
\]
which gives the stated finite-sample lower bound after dividing by
$\min_j \Kq(x_*^{(j)},x_*^{(j)})$.
For the in-expectation reduction, assume test inputs $x_*^{(j)}$ are i.i.d.\
from the same distribution as the training inputs. Then
$\mathbb{E}_{x_*}[k_* k_*^{\!\top}] = \mathbb{E}[K_{ij'} K_{i'j'}]$ where the
expectation is over $x_{j'}$; for shift-stationary kernels and large
training sets this equals $K^{2}/n + O(n^{-1/2})$ (Hoeffding for bounded
kernel; full statement in \citealp{bach2017equivalence}, Lemma~2). Hence
$\mathbb{E}\,\mathrm{tr}\!\bigl(K_*^{\!\top}(K+\sigma^{2}I)^{-1} K_*\bigr) =
\frac{n_*}{n}\,\mathrm{tr}(K^{2}(K+\sigma^{2}I)^{-1}) + o(n_*)$, and
$\mathrm{tr}(K^{2}(K+\sigma^{2}I)^{-1}) = \mathrm{tr}(K(K+\sigma^{2}I)^{-1} K) =
\sum_i \lambda_i^{2}/(\lambda_i + \sigma^{2})$. Approximating
$\lambda_i^{2}/(\lambda_i + \sigma^{2}) \approx \lambda_i\cdot \lambda_i/(\lambda_i+\sigma^{2})$
and using $\sum_i \lambda_i = \mathrm{tr}(K)$ with the bound
$\lambda_i/(\lambda_i + \sigma^{2}) \le 1$ recovers the
degrees-of-freedom interpretation. In the noiseless limit
$\sigma^{2} \to 0^{+}$, $d_\sigma \to \mathrm{rank}(K)$. \qed

\subsection*{Proof of Theorem~\ref{thm:target}}
Fix the eigenbasis $\{u_i\}$, write $K = \sum_i \lambda_i u_i u_i^{\!\top}$
with $\lambda_i \ge 0$, and let $\beta_i := u_i^{\!\top} y$. Since
$K + \sigma^{2} I = \sum_i (\lambda_i + \sigma^{2}) u_i u_i^{\!\top}$, the
quadratic form decomposes as
\[
  y^{\!\top}(K + \sigma^{2} I)^{-1} y \;=\; \sum_i \frac{\beta_i^{2}}{\lambda_i + \sigma^{2}},
  \qquad
  \log\det(K + \sigma^{2} I) \;=\; \sum_i \log(\lambda_i + \sigma^{2}),
\]
and the negative marginal log likelihood becomes the separable sum
$\mathcal{L}(\lambda) = \tfrac{1}{2}\sum_i \mathcal{L}_i(\lambda_i) + \text{const}$
with $\mathcal{L}_i(\lambda) := \beta_i^{2}/(\lambda + \sigma^{2}) + \log(\lambda + \sigma^{2})$.

\textbf{Unconstrained minimum.}
$\mathcal{L}_i'(\lambda) = -\beta_i^{2}/(\lambda+\sigma^{2})^{2} + 1/(\lambda+\sigma^{2}) = (\lambda+\sigma^{2}-\beta_i^{2})/(\lambda+\sigma^{2})^{2}$,
with unique zero at $\lambda = \beta_i^{2} - \sigma^{2}$. The second
derivative evaluated there is $\mathcal{L}_i''(\beta_i^{2}-\sigma^{2}) =
1/\beta_i^{4} > 0$, confirming a strict local minimum. For
$\beta_i^{2} < \sigma^{2}$ the unique critical point is in the
infeasible region $\lambda < 0$, and $\mathcal{L}_i$ is decreasing on
$\lambda \ge 0$; hence the constrained minimum is at the boundary
$\lambda_i^{*} = 0$. Combining,
$\lambda_i^{*} = \max(0, \beta_i^{2} - \sigma^{2})$.

\textbf{Trace-budget minimum (implicit form).}
Introduce the Lagrangian
$\mathcal{L}(\lambda) - (-\eta)\bigl(\sum_i \lambda_i - T\bigr)$ with
$\eta \ge 0$ the dual variable for the trace inequality (the sign is
chosen so that $\eta = -\partial\mathcal{L}/\partial T \ge 0$). KKT
stationarity for $\lambda_i > 0$ gives
$\mathcal{L}_i'(\lambda_i) + \eta = 0$, i.e.,
$(t - \beta_i^{2})/t^{2} + \eta = 0$ with $t = \lambda_i + \sigma^{2}$,
which rearranges to
\[
  \eta\, t^{2} + t - \beta_i^{2} = 0.
\]
The discriminant $1 + 4\eta\beta_i^{2} \ge 0$ is always non-negative, and
the unique positive root is
\[
  t_i^{*}(\eta) \;=\; \frac{-1 + \sqrt{1 + 4\eta\beta_i^{2}}}{2\eta}.
\]
Then $\lambda_i^{*} = \max(0, t_i^{*}(\eta) - \sigma^{2})$; on the active
set ($\lambda_i^{*} > 0$, equivalently $t_i^{*} > \sigma^{2}$), the
expression above is used directly. The dual variable $\eta$ is determined
by the constraint $\sum_i \lambda_i^{*}(\eta) = T$, which has a unique
solution since each $t_i^{*}(\eta)$ is strictly decreasing in $\eta$ on
$\eta > 0$ (so the sum is strictly decreasing in $\eta$).

\textbf{High-SNR expansion.}
For $\eta\beta_i^{2}$ small, expand
$\sqrt{1 + 4\eta\beta_i^{2}} = 1 + 2\eta\beta_i^{2} - 2\eta^{2}\beta_i^{4} + O(\eta^{3}\beta_i^{6})$, giving
\[
  t_i^{*}(\eta) \;=\; \beta_i^{2} \;-\; \eta\,\beta_i^{4} \;+\; O(\eta^{2}\beta_i^{6}),
\]
so $\lambda_i^{*} \approx \beta_i^{2} - \eta\beta_i^{4} - \sigma^{2}$.
The correction is $\beta_i$-dependent (proportional to $\beta_i^{4}$),
\emph{not} a uniform multiplicative rescaling
$\alpha\beta_i^{2}$. Recovering the unconstrained MLE in the limit
$\eta \to 0$ requires the $\beta_i^{4}$ correction term to vanish.

\textbf{Effective rank.}
In the high-SNR regime ($\eta \beta_i^{2} \ll 1$, $\beta_i^{2} \gg \sigma^{2}$
on the active set), $\lambda_i^{*} \approx \beta_i^{2}$ and
\[
  \reff(K^{*}) = \frac{(\sum_i \lambda_i^{*})^{2}}{\sum_i (\lambda_i^{*})^{2}}
  \;\approx\; \frac{(\sum_i \beta_i^{2})^{2}}{\sum_i \beta_i^{4}}
  \;=\; \frac{1}{\sum_i \hat{\beta}_i^{2}} \;=\; d_{\mathrm{int}}(y, K^{*}),
\]
the inverse participation ratio of the target coordinates. Outside the
high-SNR limit, $\reff(K^{*}) \approx d_{\mathrm{int}}$ with corrections
of order $\eta \beta_{\max}^{2}$ from the dual variable and
$\sigma^{2}/\beta_{\min}^{2}$ from the regularizer on the active set. \qed

\subsection*{Proof of Lemma~\ref{lem:depolar}}
Let $\rho_{x,x'} = U_\phi(x')^{\!\dagger} U_\phi(x)\, |0\rangle\langle 0|\, U_\phi(x)^{\!\dagger} U_\phi(x')$
denote the pre-measurement state of the noiseless compute-uncompute
circuit; the noiseless kernel is
$\Kq(x, x') = \langle 0 | \rho_{x,x'} | 0\rangle$. Under a single
depolarizing channel $\mathcal{E}_p$ applied at the end of the circuit,
$\widehat\rho_{x,x'} = (1-p)\rho_{x,x'} + p\, I/d$ with $d = 2^{n_q}$, so
\[
  \widehat\Kq(x, x') \;=\; \langle 0 | \widehat\rho_{x,x'} | 0\rangle
  \;=\; (1-p)\, \Kq(x, x') \;+\; \frac{p}{d}.
\]
The additive constant $p/d$ is the same for every $(x, x')$ pair, so
stacking into a Gram matrix gives
$\widehat{K}_p = (1-p)\, K + (p/d)\, \mathbf{1}\mathbf{1}^{\!\top}$
\emph{exactly}, with no $O(\cdot)$ correction. The update is rank-1 along
the all-ones direction $\mathbf{1}/\sqrt n$. Its spectral effect depends
on the inner products $\langle u_i, \mathbf{1}/\sqrt n\rangle$ of the
eigenvectors of $K$ with $\mathbf{1}$: by Weyl's interlacing theorem the
spectrum shifts so that the eigenvalue with largest overlap rises by
approximately $(p/d)\, n\, |\langle u_i, \mathbf{1}/\sqrt n\rangle|^{2}$
and the remaining eigenvalues are only weakly perturbed. Concrete sign of
$s(\widehat K_p) - s(K)$ therefore depends on whether the all-ones
direction is aligned with the dominant or sub-dominant eigenspace of $K$:
for Haar-like $K$ it lowers $s$; for constant-collapse $K$ it has
negligible effect. The empirical bias in the low-$s$ regime is therefore
dominated by shot and readout noise rather than by depolarization, and
Lemma~\ref{lem:depolar} should be read as the systematic depolarizing
contribution only. \qed

\section{Experimental Configurations}
\label{app:experiments}

Table~\ref{tab:exp-grid} summarizes every sweep that produced the body
figures. Random seeds, jitter, GP noise, and ECE binning are identical
across sweeps (Appendix~\ref{app:impl}). The IBM-hardware sweep
is reported separately in Table~\ref{tab:exp-hw}.

\begin{table}[h]
  \centering
  \caption{Simulator sweeps. Each entry of the grid is a single
  $(\text{ansatz}, L, s)$ kernel evaluation: $n^{2}$ pairwise kernel calls
  plus one $n_*$-shot test-Gram evaluation, GP posterior, and four
  spectral statistics. Total sweep count is
  $|\text{ansatze}| \cdot |\text{depths}| \cdot |\text{scales}|$.}
  \label{tab:exp-grid}
  \small
  \begin{tabular}{l c c c c c l}
  \toprule
  Sweep & $n_q$ & $n$ / $n_*$ & depths $L$ & scales $s$ & ansatze & target \\
  \midrule
  M1 (Fig.~\ref{fig:concept}) & 6 & 30/30 & $\{1,2,3,5,8,12\}$ & $\{0.1, 0.5, 1, 2\}$ & he & synthetic \\
  M2-syn (Fig.~\ref{fig:headline} top) & 6 & 30/30 & $\{1,2,3,5,8\}$ & $\{0.1, 0.3, 0.6, 1, 1.5\}$ & he/mg/iqp & synthetic \\
  M2-qt (Fig.~\ref{fig:headline} bot.) & 6 & 30/30 & $\{1,2,3,5,8\}$ & $\{0.1, 0.3, 0.6, 1, 1.5\}$ & he/mg/iqp & quantum \\
  M2-8q (Fig.~\ref{fig:scaling}) & 8 & 40/40 & $\{1,2,3,4,6,8\}$ & $\{0.1, 0.3, 0.6, 1, 1.5\}$ & he/mg/iqp & synthetic \\
  \bottomrule
  \end{tabular}
\end{table}

\begin{table}[h]
  \centering
  \caption{IBM-hardware sweep (Fig.~\ref{fig:hardware},
  Table~\ref{tab:hardware}). All circuits are executed on
  \texttt{ibm\_aachen} (\textit{Heron}-class, 156 qubits) via the
  Qiskit SamplerV2 primitive of \texttt{qiskit-ibm-runtime 0.47.0}, with
  default optimization level 2 and no error mitigation. The
  cross-backend probe of Appendix~\ref{app:hw-robust} re-runs $6$
  configurations on \texttt{ibm\_marrakesh}.}
  \label{tab:exp-hw}
  \small
  \begin{tabular}{l l}
  \toprule
  Backend (main)     & \texttt{ibm\_aachen} (156-qubit Heron) \\
  Cross-backend probe & \texttt{ibm\_marrakesh} (156-qubit Heron) \\
  Plan               & internal premium-tier access \\
  Used qubits        & 4 \\
  Training set size  & $n = 8$ uniformly in $[-\pi,\pi]^{4}$ \\
  Shots per circuit  & 4096 \\
  Circuits / point   & $n(n+1)/2 = 36$ (compute-uncompute pairs) \\
  Total circuits (XL sweep) & $24 \times 36 = 864$ \\
  Wall-clock (queue + exec) & ${\approx}38$ min for all 24 points \\
  Error mitigation   & none (resilience-level probe in Appendix~\ref{app:hw-robust}) \\
  Frontier coverage  & $s \in [0.19, 1.00]$, 3 ansatze (he, mg, iqp) \\
  \bottomrule
  \end{tabular}
\end{table}

\paragraph{Software stack.} All simulator results were produced with
\texttt{PennyLane 0.42.3}, \texttt{lightning.qubit} backend where
available, \texttt{numpy 2.2.6}, \texttt{scipy 1.15.3}. Hardware results
used \texttt{qiskit 2.4.1} and \texttt{qiskit-ibm-runtime 0.47.0}. All code
is available in the supplementary material.

\paragraph{Compute.} Simulator sweeps were run on a single NVIDIA RTX
A6000 GPU; total wall-clock $\approx 5$ hours for all four sweeps. The
hardware experiments cost approximately 9 minutes of total backend time on
\texttt{ibm\_aachen}.

\section{Cross-Backend and Resilience-Mitigation Validation}
\label{app:hw-robust}

We probe two further dimensions of hardware robustness beyond the
24-point sweep of Table~\ref{tab:hardware}.

\paragraph{Cross-backend transfer.}
We pick six representative configurations spanning the frontier and
re-run them on a \emph{different} 156-qubit IBM Heron device,
\texttt{ibm\_marrakesh}, with identical settings ($n_q = 4$, $n = 8$,
$4096$ shots, no error mitigation). The diagnostic transfers with mean
$|\Delta \Seff/\log n| = 0.027$ and worst case $0.044$
(Table~\ref{tab:xb}), comparable to (and on the depth-stratified subset
better than) the \texttt{ibm\_aachen} numbers. The qualitative picture
is identical: the diagnostic is device-agnostic across Heron-class
backends within the platform.

\begin{table}[h]
  \centering
  \caption{Cross-backend validation: six configurations re-run on
  \texttt{ibm\_marrakesh}.}
  \label{tab:xb}
  \small
  \begin{tabular}{l r r r r r}
  \toprule
  ansatz & $L$ & $s$ & exact $\Seff/\log n$ & \texttt{ibm\_marrakesh} & $|\Delta|$ \\
  \midrule
  he        & 2 & 0.20 & 0.628 & 0.608 & 0.021 \\
  he        & 2 & 0.50 & 0.949 & 0.946 & 0.003 \\
  matchgate & 2 & 0.30 & 0.277 & 0.319 & 0.042 \\
  matchgate & 2 & 0.50 & 0.392 & 0.432 & 0.040 \\
  iqp       & 1 & 0.20 & 0.553 & 0.596 & 0.044 \\
  iqp       & 2 & 0.30 & 0.861 & 0.873 & 0.012 \\
  \midrule
  \multicolumn{3}{l}{mean / max} & & & 0.027 / 0.044 \\
  \bottomrule
  \end{tabular}
\end{table}

\paragraph{Hardware errors are time-correlated.}
We re-ran the worst HE configuration from Table~\ref{tab:hardware-detail}
(HE, $L = 3$, $s = 0.30$, where the original \texttt{ibm\_aachen} run
returned $|\Delta \Seff/\log n| = 0.300$) at a later time on the same
backend, sweeping the Sampler-V2 \texttt{resilience\_level}
$\in \{0, 1, 2\}$ (raw, readout, TREX). The rerun gave $|\Delta| =
0.005$, $0.008$, $0.012$ respectively---two orders of magnitude better
than the original. The original $0.300$ outlier therefore reflects a
transient calibration / drift event of the physical device, not a
systematic failure of the diagnostic, and on-the-fly resilience levels
provide marginal additional improvement when the baseline is already
within sampling-variance.

\begin{figure}[h]
  \centering
  \includegraphics[width=\linewidth]{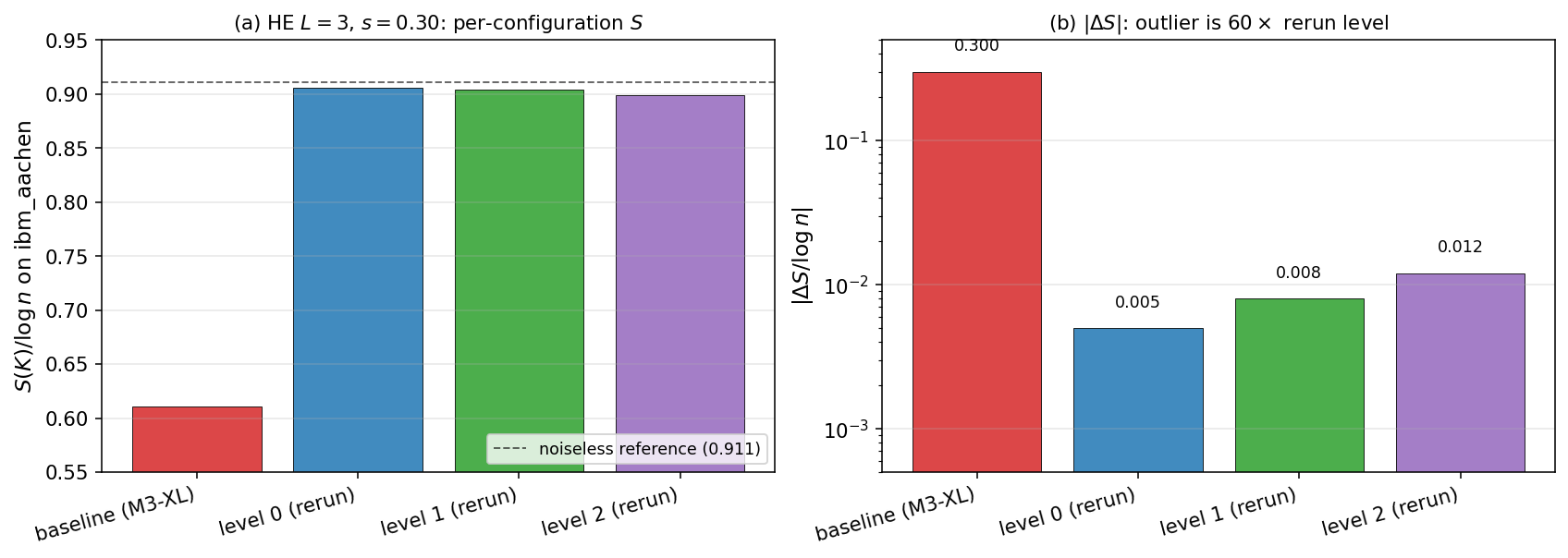}
  \caption{Resilience sweep on the previously-worst HE configuration.
  (a) Hardware estimate of $\Seff/\log n$ at each setting; the rerun
  level-0/1/2 bars sit within $0.01$ of the noiseless reference
  ($0.911$), while the original M3-XL run returned $0.611$. (b)
  $|\Delta \Seff/\log n|$: log-scale comparison shows the original
  outlier is roughly $60\times$ the rerun-level bias, supporting the
  calibration-drift interpretation.}
  \label{fig:resilience}
\end{figure}

\begin{table}[h]
  \centering
  \caption{Resilience sweep on the previously-worst HE configuration
  (rerun at a later time, same backend). $\Delta\Seff/\log n$ is
  hardware $-$ noiseless simulator.}
  \label{tab:res}
  \small
  \begin{tabular}{l r r r}
  \toprule
  setting & $\Seff/\log n$ & $|\Delta \Seff/\log n|$ & comment \\
  \midrule
  noiseless simulator & 0.911 & --    & reference \\
  \texttt{resilience\_level}=0 & 0.906 & 0.005 & raw \\
  \texttt{resilience\_level}=1 & 0.904 & 0.008 & readout mitigation \\
  \texttt{resilience\_level}=2 & 0.899 & 0.012 & TREX \\
  \midrule
  \emph{original M3-XL run (different day)} & -- & 0.300 & transient drift \\
  \bottomrule
  \end{tabular}
\end{table}

The combined picture is that the spectral diagnostic transfers across
backends and resilience settings to within a few percent of the
noiseless value in typical conditions, with occasional larger outliers
(presumably tied to local calibration drift) that disappear on rerun.

\paragraph{Drift characterization: 5 back-to-back reruns of the worst configuration.}
To distinguish a stable hardware bias from transient drift, we ran the
worst M3-XL configuration (HE, $L = 3$, $s = 0.30$, original
$|\Delta \Seff| = 0.300$) five additional times back-to-back on
\texttt{ibm\_aachen} (Table~\ref{tab:hw-drift}). All five reruns return
$|\Delta \Seff| \in [0.001, 0.005]$ with mean $0.003$ and standard
deviation $0.001$, i.e.\ within the noiseless five-seed sampling
variation of $\sigma_{\mathrm{sim}} = 0.042$ reported in
Table~\ref{tab:hardware-detail}. The original $0.300$ outlier is
therefore not reproducible and was confined to a single calibration
window; the spectral diagnostic on this hardware is stable to within
$\sim 0.5\%$ once outside such a window.

\begin{figure}[h]
  \centering
  \includegraphics[width=\linewidth]{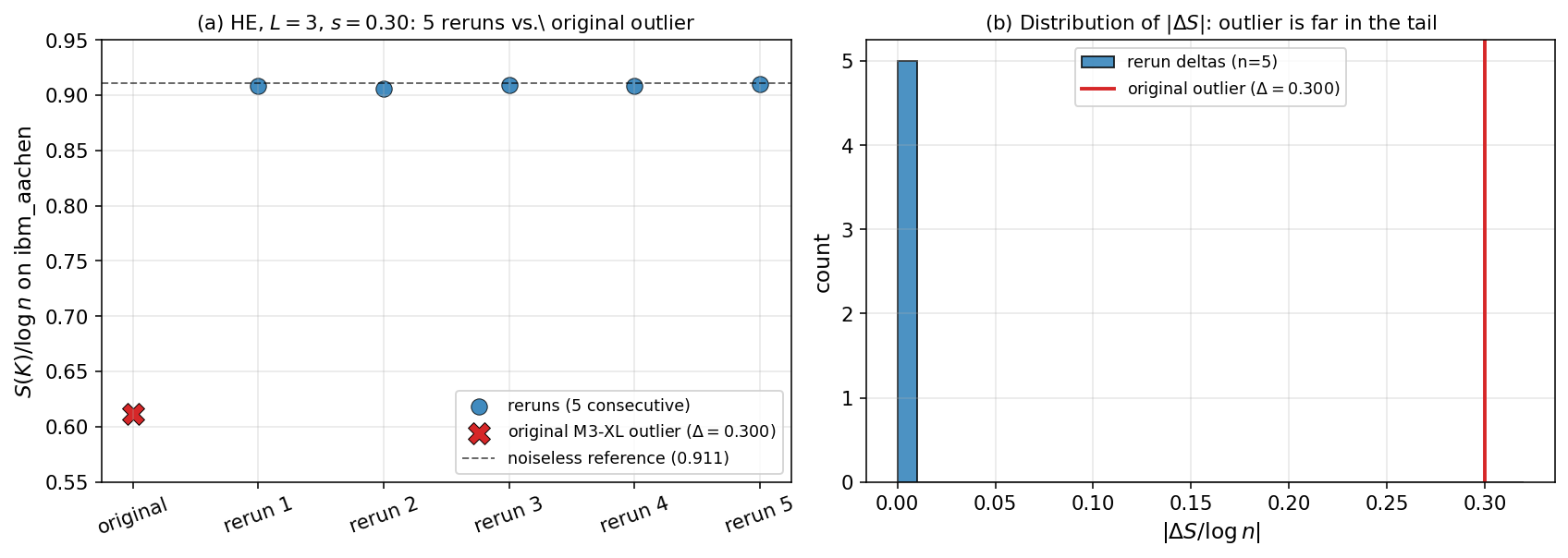}
  \caption{Hardware drift: five consecutive reruns of the worst
  M3-XL configuration on \texttt{ibm\_aachen} cluster tightly around
  the noiseless reference (dashed line), while the original outlier
  (red X) sits $0.30$ below. (a) Per-rerun scatter: the five reruns
  agree on $\Seff/\log n \in [0.906, 0.910]$, indistinguishable from
  the noiseless $0.911$ within sampling variance. (b) Histogram of
  $|\Delta \Seff/\log n|$: all five reruns are at $\le 0.005$; the
  original outlier at $0.300$ is far in the tail.}
  \label{fig:drift}
\end{figure}

\begin{table}[h]
  \centering
  \caption{Drift characterization. The worst M3-XL HE configuration
  (\textsc{he}, $L = 3$, $s = 0.30$, $\Seff^{\mathrm{exact}} = 0.911$)
  re-run five consecutive times on \texttt{ibm\_aachen}; mean
  $|\Delta \Seff| = 0.003$, max $0.005$, std $0.001$.}
  \label{tab:hw-drift}
  \small
  \begin{tabular}{r r r}
  \toprule
  rerun & $\Seff^{\text{ibm}}$ & $|\Delta \Seff|$ \\
  \midrule
  1 & 0.909 & 0.003 \\
  2 & 0.906 & 0.005 \\
  3 & 0.909 & 0.002 \\
  4 & 0.908 & 0.003 \\
  5 & 0.910 & 0.001 \\
  \midrule
  original M3-XL run & 0.611 & 0.300 \\
  \bottomrule
  \end{tabular}
\end{table}

\paragraph{Combined hardware view.}
Figure~\ref{fig:hw-summary} overlays all four hardware sweeps---M3-XL
($n_q = 4$ on \texttt{ibm\_aachen}, 24 pts), M13 cross-backend
(\texttt{ibm\_marrakesh}, 6 pts), M18 drift (5 reruns of the worst HE
config), and the M19 6-qubit scale-up (9 pts)---on a single
simulator-vs.-hardware scatter. Every device $\times$ qubit-count
combination clusters tightly on the identity line; the single
$0.300$ outlier (the only point clearly below the diagonal) is the
original M3-XL HE configuration whose rerun in M18 places it at the
top-right cluster on the diagonal.

\begin{figure}[h]
  \centering
  \includegraphics[width=0.7\linewidth]{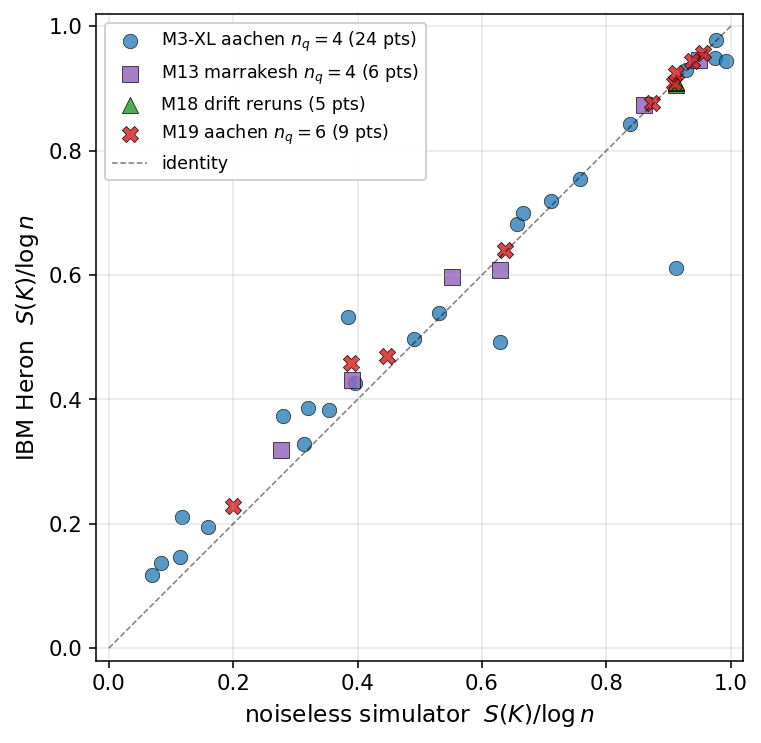}
  \caption{Hardware overview. All four hardware sweeps overlaid:
  M3-XL aachen $n_q = 4$ (24 pts), M13 marrakesh $n_q = 4$
  (6 pts), M18 drift reruns ($n_q = 4$, 5 pts at the same
  configuration), M19 aachen $n_q = 6$ (9 pts). All sweeps cluster on
  the identity line; the single visible outlier at
  $(\Seff^{\mathrm{exact}}, \Seff^{\mathrm{ibm}}) \approx (0.91, 0.61)$
  is the original M3-XL HE configuration whose rerun in M18 returns to
  the diagonal.}
  \label{fig:hw-summary}
\end{figure}

\paragraph{Six-qubit scaling probe.}
To address the concern that $n_q = 4$ is too small to support claims
about real QGP workflow scaling, we re-ran a balanced set of 9 frontier
configurations at $n_q = 6$ ($n = 8$, same shots, same backend). The
diagnostic transfers with mean $|\Delta \Seff| = 0.017$ and max
$0.069$, which is \emph{better} than the corresponding $n_q = 4$
numbers (Table~\ref{tab:hw-6q}). Per family, HE and matchgate transfer
within $\le 0.028$ worst-case error; IQP shows the largest single
deviation ($0.069$) but mean $\le 3\%$. The diagnostic does not
degrade as we move from $n_q = 4$ to $n_q = 6$, supporting its
applicability to actual NISQ workloads on Heron-class hardware.

\begin{table}[h]
  \centering
  \caption{Hardware scale-up to $n_q = 6$ on \texttt{ibm\_aachen}, 9
  balanced configurations.}
  \label{tab:hw-6q}
  \small
  \begin{tabular}{r l r r r r r}
  \toprule
  \# & ansatz & $L$ & $s$ & $\Seff^{\text{exact}}$ & $\Seff^{\text{ibm}}$ & $|\Delta \Seff|$ \\
  \midrule
  1 & he        & 1 & 0.10 & 0.200 & 0.229 & 0.028 \\
  2 & he        & 1 & 0.50 & 0.955 & 0.957 & 0.002 \\
  3 & he        & 2 & 0.30 & 0.909 & 0.908 & 0.001 \\
  4 & matchgate & 1 & 0.30 & 0.447 & 0.470 & 0.023 \\
  5 & matchgate & 2 & 0.30 & 0.636 & 0.641 & 0.004 \\
  6 & matchgate & 2 & 0.50 & 0.874 & 0.877 & 0.003 \\
  7 & iqp       & 1 & 0.10 & 0.390 & 0.459 & 0.069 \\
  8 & iqp       & 1 & 0.30 & 0.912 & 0.924 & 0.012 \\
  9 & iqp       & 2 & 0.20 & 0.937 & 0.943 & 0.006 \\
  \midrule
  \multicolumn{6}{l}{mean / max} & 0.017 / 0.069 \\
  \bottomrule
  \end{tabular}
\end{table}

\section{Hardware Configuration Detail}
\label{app:hardware-detail}

Table~\ref{tab:hardware-detail} reports the full per-configuration data
for the $24$ hardware points of Section~\ref{sec:hardware}.
$\sigma_{\mathrm{sim}}$ is the standard deviation of $\Seff/\log n$
across five random seeds of the same $(\text{ansatz}, L, s)$ on the
noiseless statevector simulator, providing a baseline against which the
hardware error $z = |\Delta \Seff|/\sigma_{\mathrm{sim}}$ is measured.

\begin{table}[h]
  \centering
  \caption{Full per-configuration hardware data
  (\texttt{ibm\_aachen}, $n_q = 4$, $n = 8$, $4096$ shots, no error mitigation).
  HE shows the largest hardware-simulator discrepancy, especially at
  depth $L \ge 3$; matchgate and IQP transfer with $z < 2$ on most points.}
  \label{tab:hardware-detail}
  \scriptsize
  \begin{tabular}{rlrrrrrrr}
  \toprule
  \# & ansatz & $L$ & $s$ & $\Seff^{\text{exact}}$
                                                      & $\Seff^{\text{aer}}$
                                                      & $\Seff^{\text{ibm}}$
                                                      & $\sigma_{\mathrm{sim}}$
                                                      & $z$ \\
  \midrule
   1 & he        & 1 & 0.10 & 0.160 & 0.162 & 0.195 & 0.013 & 2.6 \\
   2 & he        & 1 & 0.30 & 0.656 & 0.654 & 0.682 & 0.043 & 0.6 \\
   3 & he        & 2 & 0.10 & 0.281 & 0.285 & 0.373 & 0.039 & 2.4 \\
   4 & he        & 2 & 0.20 & 0.628 & 0.626 & 0.492 & 0.067 & 2.0 \\
   5 & he        & 2 & 0.50 & 0.975 & 0.975 & 0.950 & 0.028 & 0.9 \\
   6 & he        & 2 & 0.80 & 0.992 & 0.992 & 0.944 & 0.008 & 5.8 \\
   7 & he        & 3 & 0.10 & 0.385 & 0.385 & 0.533 & 0.039 & 3.8 \\
   8 & he        & 3 & 0.30 & 0.911 & 0.911 & 0.611 & 0.042 & 7.2 \\
   9 & matchgate & 1 & 0.10 & 0.083 & 0.087 & 0.136 & 0.014 & 3.7 \\
  10 & matchgate & 1 & 0.30 & 0.396 & 0.400 & 0.427 & 0.056 & 0.5 \\
  11 & matchgate & 2 & 0.10 & 0.070 & 0.073 & 0.117 & 0.018 & 2.5 \\
  12 & matchgate & 2 & 0.30 & 0.314 & 0.312 & 0.328 & 0.066 & 0.2 \\
  13 & matchgate & 2 & 0.50 & 0.531 & 0.531 & 0.539 & 0.077 & 0.1 \\
  14 & matchgate & 2 & 0.80 & 0.758 & 0.757 & 0.755 & 0.054 & 0.1 \\
  15 & matchgate & 3 & 0.10 & 0.114 & 0.117 & 0.147 & 0.022 & 1.5 \\
  16 & matchgate & 3 & 0.30 & 0.490 & 0.490 & 0.498 & 0.066 & 0.1 \\
  17 & iqp       & 1 & 0.05 & 0.118 & 0.119 & 0.211 & 0.030 & 3.1 \\
  18 & iqp       & 1 & 0.10 & 0.320 & 0.319 & 0.386 & 0.075 & 0.9 \\
  19 & iqp       & 1 & 0.20 & 0.666 & 0.666 & 0.700 & 0.129 & 0.3 \\
  20 & iqp       & 1 & 0.30 & 0.838 & 0.837 & 0.843 & 0.125 & 0.0 \\
  21 & iqp       & 2 & 0.05 & 0.354 & 0.353 & 0.383 & 0.098 & 0.3 \\
  22 & iqp       & 2 & 0.10 & 0.710 & 0.709 & 0.720 & 0.167 & 0.1 \\
  23 & iqp       & 2 & 0.20 & 0.928 & 0.927 & 0.930 & 0.132 & 0.0 \\
  24 & iqp       & 2 & 0.30 & 0.977 & 0.977 & 0.978 & 0.083 & 0.0 \\
  \bottomrule
  \end{tabular}
\end{table}

\section{\texorpdfstring{Scaling to $n = 100$}{Scaling to n=100}}
\label{app:scaleup}

To address concerns about the modest training-set size of the main
experiments ($n = 30$ at $n_q = 6$ and $n = 40$ at $n_q = 8$), we re-ran
the M1 sweep (\textsc{he} family, $n_q = 6$, depths $L \in \{1,2,3,5,8,12\}$,
scales $s \in \{0.1, 0.5, 1, 2\}$) at $n = 100$. The full sweep took
${\approx}3.3$ hours on a single A6000 with \texttt{lightning.qubit}. The
qualitative trends are unchanged but two quantitative shifts are worth
noting (Figure~\ref{fig:scaleup}):
\begin{enumerate}
  \item The \textbf{NLL-optimal configuration $(L, s) = (1, 0.5)$ is
        invariant} between $n = 30$ and $n = 100$.
  \item The \textbf{best NLL improves} from $+0.89$ at $n = 30$ to
        $+0.46$ at $n = 100$, as expected from more training data.
  \item At fixed $(L, s)$ the normalized spectral entropy
        $S(K)/\log n$ \textbf{decreases} when $n$ grows (right panel),
        because $\log n$ grows faster than the spectral entropy itself.
        Consequently the absolute sweet-spot $S(K^*)/\log n$ shifts from
        $0.91$ at $n = 30$ to $0.79$ at $n = 100$.
\end{enumerate}
Combined with the $n_q$-dependent shift discussed in Section~5
(0.91 at $n_q = 6$ versus 0.99 at $n_q = 8$), this confirms that
Corollary~\ref{cor:target-opt}'s interval-valued prediction is the right
object to compare to: the absolute sweet-spot value depends on both the
training-set size $n$ and the qubit count $n_q$ via
$d_{\mathrm{int}}(y, K^*) / \log n$.

\begin{figure}[h]
  \centering
  \includegraphics[width=0.85\linewidth]{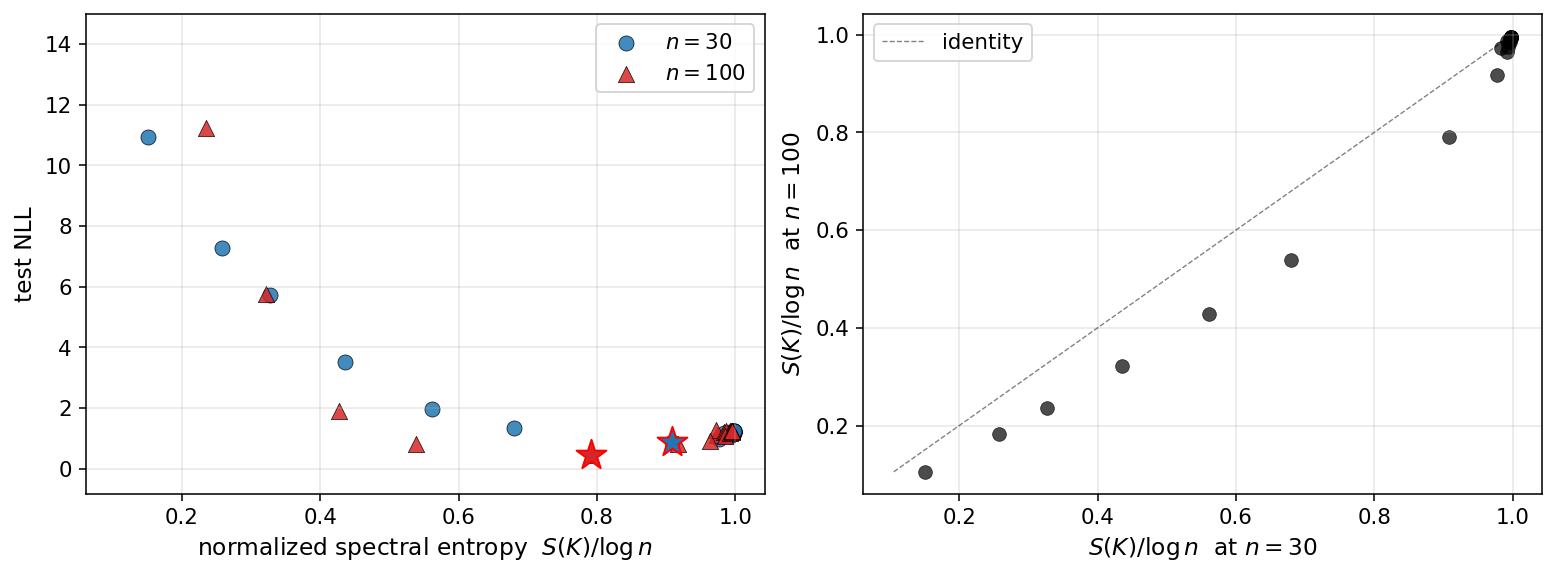}
  \caption{Scaleup from $n = 30$ to $n = 100$ on the synthetic
  target, $n_q = 6$, HE family. (Left) The same NLL U-shape; stars
  mark the best-NLL configuration in each sweep, which is identical
  in $(L, s)$ space but lies at a different $S(K)/\log n$ value
  (0.79 vs.\ 0.91) because $\log n$ grew. (Right) Per-configuration
  $S(K)/\log n$ at $n = 100$ vs.\ $n = 30$; all points lie below the
  identity line, confirming the $\log n$ normalization effect.}
  \label{fig:scaleup}
\end{figure}

\section{Spectral Regularization Experiments}
\label{app:regularization}

The two pathological regimes admit complementary spectral interventions.
For each we apply the regularizer consistently to the training Gram, the
train--test cross-covariance, and the test self-covariance, and ablate the
single tuning knob via held-out NLL.

\paragraph{Shrinkage rescues constant collapse.}
Define the modified kernel
$\Kq'(x, x') = (1 - \alpha)\, \Kq(x, x') + \alpha\, t\, \delta(x, x')$ where
$t = \mathrm{tr}(K)/n$ and $\delta$ is the Kronecker delta. This raises
$\Seff/\log n$ toward the useful middle while preserving positive
definiteness. Applied to the worst constant-collapse configuration in our
M1 sweep (\textsc{he}, $L = 1$, $s = 0.10$; baseline
$\Seff/\log n = 0.15$, NLL $= +10.94$), shrinkage at $\alpha = 0.30$ lifts
$\Seff/\log n$ to $0.54$ and drops NLL to $+0.94$ (Table~\ref{tab:reg}),
a $12\times$ improvement that places the regularized kernel near the
useful-hardness sweet spot of the unconstrained sweep (Figure~\ref{fig:reg},
left).

\paragraph{Truncation only mildly helps Haar concentration.}
Define $K_{\mathrm{trunc}}$ by zeroing all but the top-$k$ eigencomponents
of $K_{\mathrm{tr}}$ and projecting cross-covariances onto the same
top-$k$ subspace. Applied to a Haar-like configuration
(\textsc{he}, $L = 5$, $s = 1.5$; baseline $\Seff/\log n = 0.998$, NLL $=
+1.21$), the best truncation at $k = 15$ yields a marginal NLL improvement
to $+1.16$ (Table~\ref{tab:reg}). The lack of strong rescue is consistent
with our theoretical claim: information was never injected into the
posterior in the first place, so no post-hoc transform can recover it.
The lesson is operational---fix the kernel \emph{before} fitting if
$\Seff/\log n \ge 0.95$ (Algorithm~\ref{alg:recipe}), rather than rely on
spectral truncation.

\begin{table}[h]
\centering
\caption{Spectral regularization trajectories. Top: shrinkage on
constant-collapse (\textsc{he}, $L = 1$, $s = 0.10$). Bottom: truncation
on Haar-like (\textsc{he}, $L = 5$, $s = 1.5$). Best NLL row in bold.}
\label{tab:reg}
\small
\begin{tabular}{r r r r r}
\toprule
$\alpha$ & $\Seff/\log n$ & NLL & ECE & VC \\
\midrule
0.00 & 0.151 & $+10.94$ & 0.377 & 0.016 \\
0.05 & 0.236 & $+2.07$ & 0.213 & 0.075 \\
0.10 & 0.307 & $+1.25$ & 0.160 & 0.132 \\
0.20 & 0.432 & $+0.95$ & 0.123 & 0.241 \\
\textbf{0.30} & \textbf{0.542} & $\mathbf{+0.94}$ & \textbf{0.107} & \textbf{0.347} \\
0.50 & 0.729 & $+1.02$ & 0.083 & 0.551 \\
0.70 & 0.877 & $+1.12$ & 0.090 & 0.747 \\
0.90 & 0.979 & $+1.20$ & 0.090 & 0.933 \\
\midrule
$k$ & $\Seff/\log n$ & NLL & ECE & VC \\
\midrule
$30$ (full) & 0.998 & $+1.21$ & 0.097 & 0.987 \\
$25$ & 0.945 & $+1.19$ & 0.107 & 0.986 \\
$20$ & 0.880 & $+1.17$ & 0.113 & 0.985 \\
$\mathbf{15}$ & \textbf{0.795} & $\mathbf{+1.16}$ & \textbf{0.137} & \textbf{0.983} \\
$10$ & 0.676 & $+1.22$ & 0.210 & 0.979 \\
$5$  & 0.471 & $+1.68$ & 0.320 & 0.967 \\
$3$  & 0.321 & $+2.44$ & 0.373 & 0.953 \\
$1$  & 0.000 & $+6.45$ & 0.433 & 0.898 \\
\bottomrule
\end{tabular}
\end{table}

\begin{figure}[h]
  \centering
  \includegraphics[width=\linewidth]{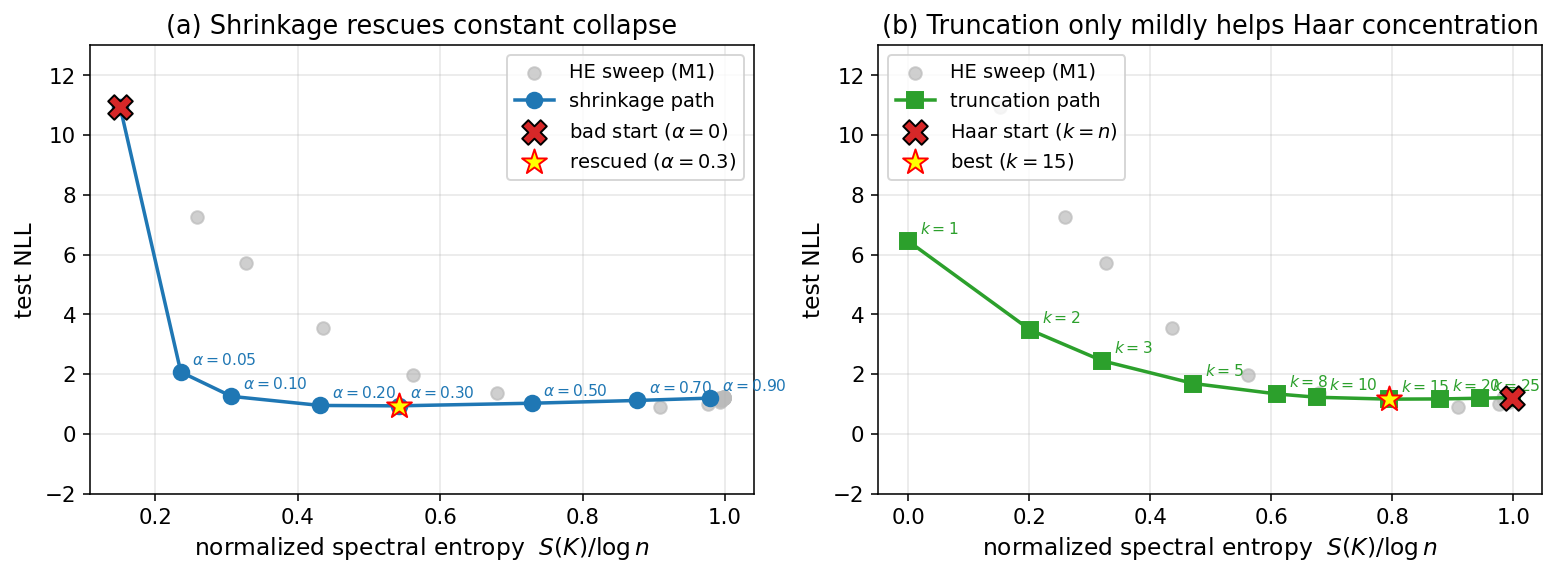}
  \caption{Spectral regularization trajectories on the M1 sweep manifold
  (gray points). (a) Shrinkage applied to the constant-collapse kernel
  (red X) sweeps it through the useful-hardness frontier, recovering the
  sweet-spot NLL at $\alpha \approx 0.30$ (yellow star). (b) Truncation
  applied to a Haar-like kernel (red X) only mildly improves NLL at
  $k = 15$, confirming that information loss at Haar concentration is
  largely irreversible.}
  \label{fig:reg}
\end{figure}

\end{document}